\newtheorem{theorem}{Theorem}
\newtheorem{lemma}{Lemma}
\newtheorem{observation}{Observation}
\theoremstyle{definition}
\newtheorem{definition}{Definition}
\newcommand*{\addFileDependency}[1]{% argument=file name and extension
  \typeout{(#1)}
  \@addtofilelist{#1}
  \IfFileExists{#1}{}{\typeout{No file #1.}}
}
\RenewDocumentCommand\Pr{sO{}r()}{%
  \operatorname{\mathbb{P}}%
  \begingroup
  \IfBooleanTF{#1}
    {\PrInn*{#3}}
    {\PrInn[#2]{#3}}%
  \endgroup
}
\DeclarePairedDelimiterX\PrInn[1][]{%
  \activatebar
  #1%
}
\newcommand{\activatebar}{%
  \begingroup\lccode`~=`|
  \lowercase{\endgroup\def~}{\,\delimsize\vert\,}%
  \mathcode`|=\string"8000
}
\DeclareMathOperator{\sign}{sign}
\DeclareMathAlphabet{\mathcal}{OMS}{cmsy}{m}{n}
\title{Addressing Fairness in Classification with a Model-Agnostic Multi-Objective Algorithm}
\author[1]{\href{mailto:Kirtan Padh <kirtan.701@gmail.com>?Subject=Your UAI 2021 paper}{Kirtan~Padh}{}\thanks{Most of the work was done while at EPFL and Swisscom.}} % Lead author
\author[2]{Diego Antognini}
\author[3]{Emma Lejal-Glaude}
\author[2]{Boi Faltings}
\author[3]{Claudiu Musat}
\affil[1]{Helmholtz AI, Germany}
\affil[2]{Ecole Polytechnique Fédérale de Lausanne, Switzerland}
\affil[3]{Swisscom, Switzerland}
\begin{document}
\maketitle

\begin{abstract}
  The goal of fairness in classification is to learn a classifier that does not discriminate against groups of individuals based on \textit{sensitive attributes}, such as race and gender. One approach to designing fair algorithms is to use relaxations of fairness notions as regularization terms or in a constrained optimization problem. We observe that the hyperbolic tangent function can approximate the indicator function. We leverage this property to define a differentiable relaxation that approximates fairness notions provably better than existing relaxations. In addition, we propose a model-agnostic multi-objective architecture that can simultaneously optimize for multiple fairness notions and multiple sensitive attributes and supports all statistical parity-based notions of fairness. We use our relaxation with the multi-objective architecture to learn fair classifiers. Experiments on public datasets show that our method suffers a significantly lower loss of accuracy than current debiasing algorithms relative to the unconstrained model.
\end{abstract}

\section{Introduction}
\textbf{Machine learning is omnipresent.} Machine learning systems have become ubiquitous in our daily lives and society. They are being adopted into an increasing variety of applications at an accelerating pace, including high-impact domains such as healthcare, job hiring, education, and criminal justice, among others~\citep{barocas-hardt-narayanan}. Despite this, questions remain on the ethical soundness of many such algorithms, as AI/ML systems have often been demonstrated to have unintentional and undesirable biases against \textit{sensitive attributes} such as age, gender, and race.

\textbf{Automated predictions can be biased.}  We consider an algorithm as biased or discriminatory when it does not satisfy a preconceived notion of equality with respect to one or more sensitive attributes. The COMPAS score~\citep{propublica16}, used in courts in the U.S. to predict the probability of recidivism, is one of the most well-known examples of discrimination by algorithms~\citep{propublica16}. Among the defendants who do not re-offend, the algorithm predicts black defendants to be higher risk at a much higher rate than white defendants. This can, in turn, lead to a further exacerbation of systemic bias through a negative feedback loop where the results of the algorithm bias the data even further, reflecting the bias even more in the next round of predictions.

\textbf{The bias can increase over time.}  A similar bias, which consists of reinforcing existing beliefs, is also present on social media: the filter bubble~\citep{pariser11}. The system recommends content that we tend to agree with, further reinforcing our views and putting us in an ``echo chamber" with other users with similar views, leading to polarization with users with opposing views. This is believed to have heavily influenced the $2016$ U.S. presidential elections~\citep{baer16}, and it is the kind of bias that can, over time, change the structure of society. Just as ever-present machine learning algorithms are in society, so is the unintentional algorithmic bias arising from such applications, thus making it critical to study fairness in machine learning.

\textbf{Debiasing approaches can be divided into three main categories}. Firstly, we have \textit{pre-processing algorithms}, where the data is processed before training to rid it of bias with the expectation that the classifier learned on the modified data would be fair~\citep{kamiran2012data,sattigeri2019fairness, calmon2017optimized}. Secondly, we have \textit{in-processing algorithms} that propose changes at training time, often in the form of minor changes to existing architectures, or entirely different algorithms~\citep{celis2019classification,lohaus20,zafar2017linear}. One approach to in-processing is to define relaxations of fairness notions and solve a constrained optimization problem or use the relaxations as regularization terms. Lastly, there are the \textit{post-processing algorithms} that filter the output of the classifier to ensure fairness~\citep{hardt2016equality, chierichetti2017fair,chzhen2019leveraging}.

\textbf{Debiasing is a naturally multi-objective problem.}  Most real-world applications have multiple sensitive attributes. We might want to satisfy different fairness notions for each attribute or several notions for a single attribute, making debiasing a naturally multi-objective problem. However, the research on multi-objective approaches to fairness is very sparse: most methods are specialized towards a specific fairness notion and only apply to a single attribute. Moreover, many fairness relaxations do not approximate the true fairness value well~\citep{lohaus20}.

\textbf{In this work,} we first define a novel fairness relaxation and show that it approximates the true fairness value better than existing relaxations. Second, we propose a model-agnostic gradient-based multi-objective algorithm that supports multiple sensitive attributes and all notions of fairness that require a form of statistical parity across groups. Experiments on four real-world datasets show that our novel relaxation integrated with the proposed multi-objective algorithm finds fair algorithms while suffering a lower loss of accuracy than state-of-the-art algorithms. Moreover, it performs effectively in simultaneously debiasing for multiple sensitive attributes and measures of fairness with a very low loss of utility.

\section{Related Work}
We consider the following notions of fairness for our analyses: \textit{demographic parity}~(DP) and \textit{equality of opportunity}~(EOP). Let the positive prediction be the favorable one in a binary classification problem. For example, for loan default prediction, predicting non-default is favorable. If the sensitive attribute is age with groups `young' and `adult,' DP requires the proportion of individuals labeled as positive to be the same for both `young' and `adult' groups. In contrast, EOP requires the true positive rate to be the same for both `young' and `adult' groups. These definitions are formalized in Section
~\ref{sec:background}.

\paragraph{Relaxation-based Approaches.} The approach used by \cite{donini2018empirical,zafar2017fpr,zafar2017linear} is to write DP or EOP in an equivalent but easier to handle form, and replace the indicator function by a relaxation.
\cite{zafar2017fpr,zafar2017linear} used a covariance measure between the sensitive attribute and the model parameters as a proxy for the fairness constraint. This leads to a convex constraint for DP~\citep{zafar2017linear} but a non-convex one for EOP~\citep{zafar2017fpr}. \cite{zafar2017fpr} proposed a convex-concave optimization process to deal with the non-convex constraint. For linear models, the covariance constraints reduce to a linear relaxation of the fairness measure. \cite{lohaus20} designed an elegant approach where they used an existing convex relaxation of the fairness measures as a regularization term in the loss function, with regularization parameter $\lambda$. They proved that the relaxed fairness constraint is a continuous function of $\lambda$, enabling a binary search of $\lambda$ to find a provably fair classifier. \cite{celis2019classification} proposed a method to solve multiple fairness measures simultaneously by reducing a constrained optimization of the loss function to an unconstrained problem by the lagrangian principle. 

While these methods are all attractive approaches and work well in practice for a single sensitive attribute, they suffer from two drawbacks:
\begin{enumerate*}
\item they cannot be integrated into any machine learning model, and \item require distinct and separate algorithms to solve.
\end{enumerate*}
Besides, \cite{lohaus20} require strong conditions on the classifier, and \cite{zafar2017fpr, zafar2017linear} cannot handle multiple fairness measures simultaneously. In comparison, our method handles multiple parity-based measures and is model-agnostic.

Moreover, several existing relaxations inadequately approximate the true fairness value: the relaxations might be satisfied, but the model may still be unfair~\citep{lohaus20}. Using the evaluation methods proposed in \cite{lohaus20} to gauge the effectiveness of different relaxations, we note that our novel relaxation is empirically better.

\paragraph{Multi-Objective Approaches.} The line of research that involves multiple objectives in fairness is very recent. \cite{valdivia2020fair} proposed an evolutionary approach to optimize for several objectives, using the multi-objective algorithm to search the space of hyperparameters of the model to find one that will work well on multiple objectives. However, it is possible that for some algorithms, there is no set of hyperparameters that perform well for all the objectives. This method is also infeasible to apply to large models since it involves training and evaluating hundreds of hyperparameter tuples. Finally, \cite{celis2019classification} proposed an algorithm for a class of statistical-based fairness measures. Their proposal is a meta-algorithm that operates by estimating conditional probabilities. In contrast, our approach can take any existing loss-based model as part of the multi-objective architecture. This would make it much easier for example to reuse production models which have already been optimized and would save the need to implement new architectures from scratch to account for fairness. 

\section{Background}
\label{sec:background}
Let $x \in \mathbb{R}^d$ be the features, where $d$ is the total number of features, and $x = (z, a_1, a_2 \ldots a_i \ldots a_t)$. Each $a_i$ refers to a \textit{sensitive attribute}, and $z$ the rest of the attributes. The feature space for $z$, $a_i$, and $x$ is denoted by $\mathcal{Z}$, $\mathcal{A}_i$, and $\mathcal{X}$, respectively. Therefore, the domain of $x$ is: 
\begin{equation}
   \mathcal{X}  = \mathcal{Z} \times \mathcal{A}_1 \times \mathcal{A}_2 \times \ldots \mathcal{A}_i \times \ldots \mathcal{A}_T
\end{equation} 
For the sake of simplicity of notation, we assume that we have only one sensitive attribute, that is $T=1$, and it is denoted simply by $a$, with feature space $\mathcal{A} = \{-1, 1\}$. Each individual is assigned an outcome $y$ from the feature space $\mathcal{Y} = \{-1, 1\}$, which is the label we want to predict for~$x$. Assume that there is a distribution $\mathcal{P}_{\mathcal{D}} $ over the domain $\mathcal{D} = \mathcal{X} \times \mathcal{A} \times \mathcal{Y}$. Each $(x, a, y)$ is sampled $i.i.d.$ from~$\mathcal{P}_{\mathcal{D}}$. We denote the predictor by $h:\mathcal{X} \rightarrow \mathcal{Y}$, where the predicted outcome of $x$ is $h(x) \in \{ -1, 1 \}$. We define $h(x)$ as $sign(f(x))$, where $f:\mathcal{X} \rightarrow \mathbb{R}$ maps each $x$ to a real-valued number, and is fair with respect to the sensitive attributes.

\paragraph{Demographic Parity (DP):} 
A classifier $f$ satisfies demographic parity if the probability of the outcome is independent of the value of the sensitive attribute:
\begin{equation}
\label{eq:dp}
    \Pr(f(x)>0|a = -1) = \Pr(f(x)>0 | a = 1)
\end{equation}

\paragraph{Difference of Demographic Parity (DDP):} The first step, in writing Equation~\ref{eq:dp} as an expression that can be used in a gradient-based optimization, is to relax the definition to be a difference between the expected values of quantities on either side of the equality. This is called the \textit{Difference of Demographic Parity} (DDP), defined as:
\begin{equation}
    \mathrm{DDP}(f) = \mathop{\mathbb{E}}_{\mathcal{P}_{\mathcal{D}}}[\mathbbm{1}_{f(x)>0} | a=- 1] - \mathop{\mathbb{E}}_{\mathcal{P}_{\mathcal{D}}}[\mathbbm{1}_{f(x)>0} | a=1]
\end{equation}
where $\mathbbm{1}_c$ is the indicator function on the condition $c$, which is to say that $\mathbbm{1}_c$ is $1$ if $c$ is true, and $0$ otherwise.

It is clear that when $\mathrm{DDP}(f) = 0$, we achieve perfect demographic parity, although that is usually not a realistic goal. We can relax this requirement by using a threshold: given a threshold $\tau \geq 0$, we say that $f$ is $\tau\text{-}\mathrm{DDP}$ fair if $|\mathrm{DDP}(f)| \leq \tau$. However, this is still not enough to define a differentiable relaxation; we only have an empirical estimate $\widehat{\mathcal{P}}_{\mathcal{D}}$ of $\mathcal{P}_{\mathcal{D}}$ consisting of $n$ points. In that manner, the empirical estimate of DDP can be written as:
\begin{equation}
\label{eq:ddp}
    \widehat{\mathrm{DDP}}(f) \; = \; \frac{1}{n_{-1}}\;\sum_{\substack{\widehat{\mathcal{P}}_{\mathcal{D}}\\ a=-1}} \mathbbm{1}_{f(x)>0} \; - \; \frac{1}{n_{1}}\;\sum_{\substack{\widehat{\mathcal{P}}_{\mathcal{D}}\\ a=1}} \mathbbm{1}_{f(x)>0}
\end{equation}
Here $n_{-1}$ is the number of points with $a=-1$ and $n_{1}$ is the number of points with $a=1$. The total number of points then is $n = n_{-1} + n_{1}$. This expression is very close to what we can use as a constraint. However, the main problem with using this expression directly in a gradient-based optimization is the non-differentiability because of the indicator function. The differences between different relaxations then come from how the indicator function is relaxed in the expression above. 

\paragraph{Equality of Opportunity (EOP):} A classifier $f$ satisfies equality of opportunity if the probability of getting a true positive is independent of the value of the sensitive attribute:
{\footnotesize
\begin{equation}
\label{eq:eop}
    \Pr(f(x)>0|a = -1, y=1) = \Pr(f(x)>0 | a = 1, y=1)
\end{equation}
}
\paragraph{Difference of Equality of Opportunity (DEO):}
We relax Equation~\ref{eq:eop}, similarly than for the demographic parity in Equation~\ref{eq:ddp}, to get the \textit{Difference of Equality of opportunity} (DEO). Then the empirical version of DEO is expressed as follows:
\begin{equation}
    \widehat{\mathrm{DEO}}(f) \; = \; \frac{1}{n_{-1}}\;\sum_{\substack{\widehat{\mathcal{P}}_{\mathcal{D}}\\ a=-1 \\ y=1}} \mathbbm{1}_{f(x)>0} \; - \; \frac{1}{n_{1}}\;\sum_{\substack{\widehat{\mathcal{P}}_{\mathcal{D}}\\ a=1 \\ y=1}} \mathbbm{1}_{f(x)>0}
\end{equation}

\subsection{Fairness Relaxations}
The differences between relaxations come from how the indicator function is relaxed in the expressions $\widehat{\mathrm{DDP}}(f)$ and $\widehat{\mathrm{DEO}}(f)$. We conduct all analyses for demographic parity; the extension to EOP is straightforward by conditioning on the positive label.

\paragraph{Linear Relaxations:} \cite{donini2018empirical,zafar2017linear} proposed a linear relaxation, where the indicator function is simply replaced by a linear classifier $f(x)$. $\widehat{\mathrm{DDP}}(f)$ can then be written in the following equivalent form after substituting $\mathbbm{1}_{f(x)>0}$ by $f(x)$~\citep{lohaus20}:
\begin{equation}
    \left|\;\mathrm{LR}_{\widehat{\mathrm{DDP}}}(f)\;\right|=\left|\;\frac{1}{n} \sum_{\widehat{\mathcal{P}}_{\mathcal{D}}}\; C\left(a, \widehat{\mathcal{P}}_{\mathcal{D}}\right) f(x)\;\right| \leq \tau
\end{equation}
where $C(a, \widehat{\mathcal{P}}_{\mathcal{D}})$ is simply a constant multiplicative factor.

\paragraph{Convex-Concave Relaxations:} \cite{zafar2017fpr} proposed the convex-concave relaxation, where $\mathbbm{1}_{f(x)>0}$ is relaxed to $\min(0, f(x))$. Let $\hat{p}_{1}$ be the empirical estimate of the proportion of individuals with $a=1$. For the case of such a relaxation for DDP, $\widehat{\mathrm{DDP}}(f)$ can be written in the following equivalent form after substituting $\mathbbm{1}_{f(x)>0}$ by $\min(0, f(x))$~\citep{lohaus20}:
{\small
\begin{equation}
\left|\mathrm{CCR}_{\widehat{DDP}}(f)\right|= \left|\frac{1}{n} \sum_{\widehat{\mathcal{P}}_{D}} \;C'\left(a, \widehat{\mathcal{P}}_{\mathcal{D}}\right) \min (0, f(x))\right| \leq \tau
\end{equation}
}

\begin{figure*}[t!] 
	\centering
	\begin{subfigure}[t]{0.242\textwidth}
		\centering
		\includegraphics[width=1\linewidth]{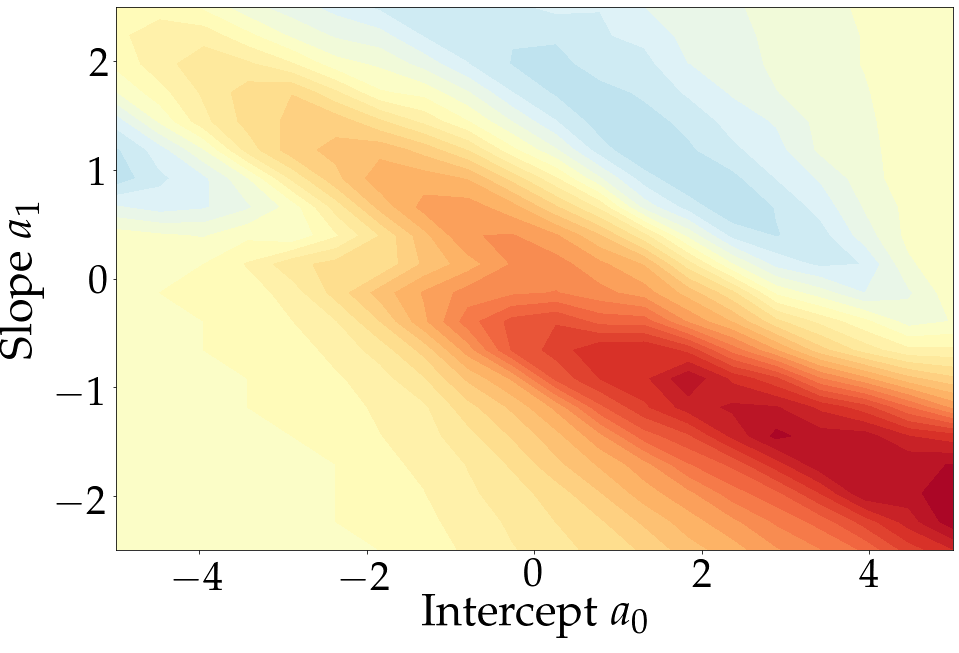}
		\caption{True DDP (Ideal)}\label{fig:ddp-true}		
	\end{subfigure}
	\begin{subfigure}[t]{0.23\textwidth}
		\centering
		\includegraphics[width=1\linewidth]{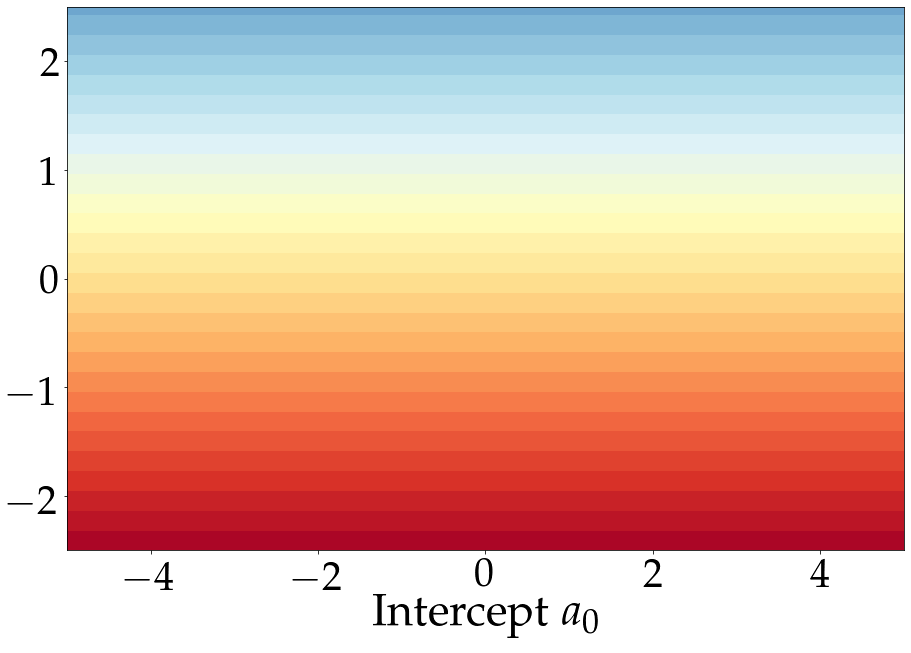}
		\caption{Linear}\label{fig:ddp-lr}
	\end{subfigure}
  \begin{subfigure}[t]{0.23\textwidth}
		\centering
		\includegraphics[width=1\linewidth]{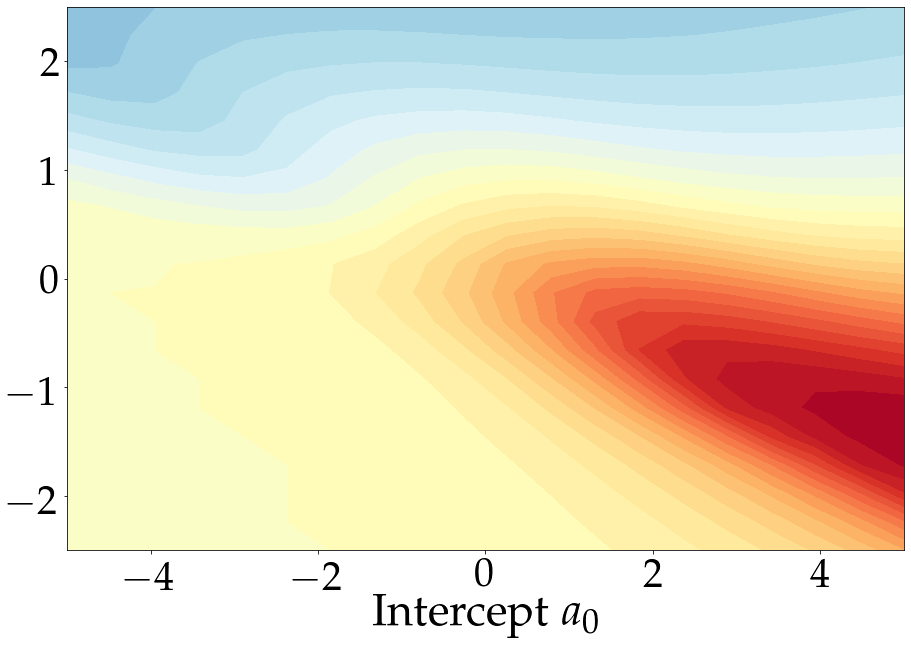}
		\caption{Convex-Concave}\label{fig:ddp-ccr}		
	\end{subfigure}
	\begin{subfigure}[t]{0.263\textwidth}
		\centering
		\includegraphics[width=1\linewidth]{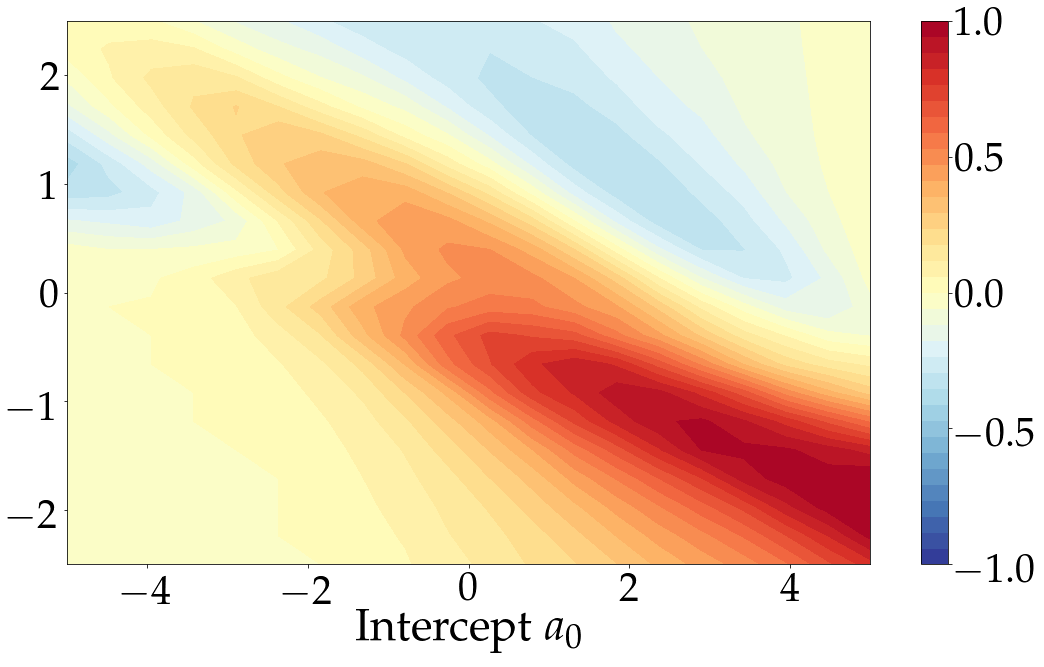}
		\caption{Hyperbolic Tangent (Ours)}\label{fig:ddp-tanh}
	\end{subfigure}
	\caption[Comparing fairness relaxations for DDP]{Each plot describes the family of linear classifiers in two dimensions which can be used to separate the classes in a two dimensional synthetic dataset. The decision boundary is defined as $x_2 = a_1x_1 + a_0$, meaning that $f(x)=-x_2 + a_1x_1 + a_0$. The point at $(a_0, a_1)$ on each plot gives normalized value of each of the following quantities for a classifier $f(x)$ with parameters $(a_0, a_1)$: (\ref{fig:ddp-true}) True Difference of demographic parity (DDP), (\ref{fig:ddp-lr}) Linear relaxation of the DDP, (\ref{fig:ddp-ccr}) Convex-concave relaxation of the DDP, and (\ref{fig:ddp-tanh}) Hyperbolic tangent relaxation (HTR). Yellow is fair. Ideally, we want the plot of the relaxations to be like that of the true DDP (\ref{fig:ddp-true}).}
\label{fig:compare-relax}
\end{figure*}
\section{A Novel Fairness Relaxation}
The existing relaxations described do not approximate the true DDP value accurately. To illustrate this, we use a two-dimensional toy dataset for binary classification, similarly to \cite{lohaus20}. Various Gaussian distributions are used to generate the points for each label. Each point is also assigned one of two groups to simulate the sensitive attribute. As we can see from Figure~\ref{fig:compare-relax}, existing relaxations do not faithfully capture the true DDP value.

To solve this problem, we introduce a new relaxation, called the hyperbolic tangent relaxation (HTR). Let $\sign(x)$ denote the signum of $x$, i.e. $sign(x)$ is $1$ if $x>0$, $-1$ if $x<0$ and $0$ if $x=0$. Figure~\ref{fig:compare-relax} further illustrates that our relaxation is the best at capturing the true DDP.

\begin{theorem}
\label{thm:tanh-sign}
The hyperbolic tangent of $n*x$ converges to the sign of $x$ for every fixed $x\in \mathbb{R}$ as $n$ goes to infinity. Formally,
\begin{equation}
\lim_{n\to \infty}\tanh(n*x)=\sign(x) \, \forall x\in \mathbb{R}
\end{equation}
\end{theorem}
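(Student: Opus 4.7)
The plan is to prove the statement by a simple case analysis on the sign of $x$, using the closed-form expression $\tanh(y)=\frac{e^{y}-e^{-y}}{e^{y}+e^{-y}}$ and reducing each case to a limit of an elementary exponential.

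First, I would dispose of the trivial case $x=0$: here $\tanh(n\cdot 0)=\tanh(0)=0=\sign(0)$ for every $n$, so the limit is immediate and equals $\sign(0)$.

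Next, for $x>0$, I would rewrite
\begin{equation*}
\tanh(nx)=\frac{e^{nx}-e^{-nx}}{e^{nx}+e^{-nx}}=\frac{1-e^{-2nx}}{1+e^{-2nx}},
\end{equation*}
and observe that since $x>0$ the exponent $-2nx\to-\infty$, so $e^{-2nx}\to 0$ as $n\to\infty$. By continuity of the quotient at $(1,1)$, the expression converges to $1=\sign(x)$. The symmetric case $x<0$ is handled by multiplying numerator and denominator by $e^{nx}$ instead to get $\tanh(nx)=\frac{e^{2nx}-1}{e^{2nx}+1}$; since $x<0$ gives $2nx\to-\infty$ and hence $e^{2nx}\to 0$, this tends to $-1=\sign(x)$. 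Alternatively, one can invoke the oddness of $\tanh$ and $\sign$ to reduce the negative case to the positive one.

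There is no substantive obstacle here: the statement is essentially a restatement of the fact that $\tanh$ saturates to $\pm 1$ at $\pm\infty$, and the only bookkeeping is making sure the $x=0$ case is treated separately, since otherwise dividing by $x$ or rewriting with $e^{\pm 2nx}$ could be mishandled. Thus the proof reduces to three short limits and a line of case distinction.
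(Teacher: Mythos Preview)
Your proof is correct and follows essentially the same approach as the paper's: a case analysis on the sign of $x$, the rewriting $\tanh(nx)=\frac{1-e^{-2nx}}{1+e^{-2nx}}$ for $x>0$, and either a symmetric manipulation or the oddness of $\tanh$ for $x<0$, with the trivial $x=0$ case handled separately. The only cosmetic difference is that the paper explicitly invokes the quotient law for limits of sequences where you appeal to continuity of the quotient, and the paper uses oddness for the negative case rather than the alternative $e^{2nx}$ rewrite you offer first.
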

\begin{proof}
The idea is that replacing $x$ by $n*x$ in $\tanh(x)$ compresses the horizontal scale. A more detailed proof is provided in the appendix .
\end{proof}
We can leverage Theorem~\ref{thm:tanh-sign} to find an expression that converges to the indicator function of $x>0$.
\begin{lemma}
$\tanh(n*\max(0, x))$ converges to the indicator function of $x>0$ as $n$ goes to infinity. Formally,
\begin{equation}
\lim_{n\to \infty}\tanh(n*\max(0, x))=\mathbbm{1}_{x>0} \, \forall x\in \mathbb{R}
\end{equation}
\end{lemma}
\begin{proof}
The proof is by simply replacing $x$ in Theorem~\ref{thm:tanh-sign} by $\max(0, x)$. The details are worked out in the appendix.
\end{proof}
\paragraph{Hyperbolic Tangent Relaxation (HTR):} Instead of relaxing $\mathbbm{1}_{f(x)>0}$ by $f(x)$ or $min(0, f(x))$ as proposed in the linear and convex-concave relaxations, respectively, we propose $\tanh(c*\max(0, f(x)))$, for small constants $c$. The larger the value of $c$, the better we can approximate the indicator function, but at the cost of degradation in the gradient's behavior. 

We denote $\tanh(c*\max(0, x))$ as $t(c, x)$. Formally, the hyperbolic tangent relaxation for the DDP, denoted by $HTR$ can be written as follows, for a chosen constant $c$:
{\small
\begin{equation} \label{eq:htr}
    \mathrm{HTR}_{\widehat{DDP}}(f) \; = \; \frac{1}{n_{-1}}\;\sum_{\substack{\widehat{\mathcal{P}}_{\mathcal{D}}\\ a=-1}} t(c, f(x)) \; - \; \frac{1}{n_{1}}\;\sum_{\substack{\widehat{\mathcal{P}}_{\mathcal{D}}\\ a=1}} t(c, f(x))
\end{equation}
}Finally, Figure~\ref{fig:compare-relax} demonstrates how the HTR is a better approximation of DDP than existing relaxations.
\section{The MAMO-fair Algorithm}
As our multi-objective optimization method, we use the algorithm of \cite{poirion2017descent} with modifications suggested by \cite{milojkovic2019multi}. We assume without loss of generality that all objectives are to be minimized. A multi-objective optimization problem can then be formulated as follows:
\begin{equation}
\label{eq:opt}
    \min\limits_{w \in \mathbb{R}^{d}} \; L(w) = \min\limits_{w \in \mathbb{R}^{d}} \; (\ell_1(w), \ell_2(w), \ldots \ell_k(w))
\end{equation}
where $\ell_i:\mathbb{R}^d \rightarrow \mathbb{R} \quad \forall i = 1, ..., k$ are the $k$ objectives, with $k \geq 2$. We interpret $L(w)$ as a multi-objective loss function and each $\ell_i(w)$ as one of the loss functions to be optimized by a machine learning model, with $w$ being the model parameters. Unlike in single-objective optimization problems, solutions of a multi-objective optimization problem are not ordered linearly. They are instead compared by dominance of solutions.

\begin{definition}[\textbf{Dominance of a Solution}]
A solution $w_1$ of Equation~\ref{eq:opt} dominates another solution $w_2 \ne w_1$ if $\ell_i(w_1)\leq \ell_i(w_2)\; \forall \; i = 1, ..., k$ and there exists $i_0 \in [1,k]$ such that $\ell_i(w_1) < \ell_i(w_2)$.
\end{definition}

\begin{definition}[\textbf{Pareto Optimality}]
A solution $w^*$ of Equation~\ref{eq:opt} is \textit{pareto optimal} if no other solution $w$ dominates~it. 
\end{definition}

\begin{definition}[\textbf{Pareto Front}]
The \textit{pareto front} of a set of solutions of Equation~\ref{eq:opt} is the set of all non-dominated solutions.
\end{definition}

We denote the gradient of objective $\ell_i(w)$ by $\nabla_w \ell_i(w)$. The key idea of the algorithm in optimizing simultaneously several objectives is to find a single vector, that gives the descent direction for every objective. This is called the common descent vector (CDV). The Karush-Kuhn-Tucker (KKT) conditions~\citep{karush1939minima,kuhn1951} provide necessary optimality conditions for the solution of a deterministic gradient-based optimization. A solution which satisfies the KKT conditions for a multi-objective optimization problem is called a pareto stationary point. %A pareto stationary point for us is defined as follows
\begin{definition}[\textbf{Pareto Stationary}]
A solution $w$ is pareto stationary if:
%\begin{align*}
%\exists &(\alpha_1,, \alpha_2, \ldots,\alpha_k) \\
%\text{s.t.} \quad & \sum_{i=1}^{k} \alpha_{i}=1 \\
%\text{and} \quad &\sum_{i=1}^{k} \alpha_{i} \nabla_{w} \ell_i(w)=0
%\end{align*}
{\small
\begin{equation}
\exists (\alpha_1,, \alpha_2, \ldots,\alpha_k) \Bigg| \sum_{i=1}^{k} \alpha_{i}=1, \sum_{i=1}^{k} \alpha_{i} \nabla_{w} \ell_i(w)=0
\end{equation}
}
\end{definition}
Note that pareto stationarity is a necessary but not sufficient condition for optimality. The pareto stationary point admits a solution in the convex hull of the set $\{\nabla_{w} l_i(w) \; | \; i \in [k]\}$~\citep{desideri2012multiple}, which is the same as saying that the zero vector needs to be in the convex hull. The key idea is that the pareto stationary point can be found by iteratively solving the following optimization problem.
\begin{definition}[\textbf{Quadratic Constrainted Optimization Problem (QCOP)}]
The QCOP for our purpose is defined as follows:
\begin{equation}
\label{eq:min_sol}
    \min _{ \alpha_{1}, \ldots, \alpha_{n}}\left\{ \;\left\|\sum_{i=1}^{n} \alpha_{i} \nabla_{w} \ell_i(w)\right\|^{2} \Bigg| \sum_{i=1}^{n} \alpha_{i}=1, \alpha_{i} \geq 0 \; \right\}
\end{equation}
\end{definition}
%Let the norm minimizing vector found as a solution of the above equation be $p^*$. Then we have either:
Let $p^*$ be the vector of a solution of the Equation~\ref{eq:min_sol}, meaning that it is a convex combination of gradients specified by alpha. Then we have either:
\begin{enumerate}
    \item $\|p^*\|=0$, implies that the solution $w$ is pareto stationary;
    \item $\|p^*\|>0$, the solution $w$ is not pareto stationary and $\nabla_{w} \mathbf{L(w)} = p^*$, where $\nabla_{w} \mathbf{L(w)}$ denotes the common descent vector.
\end{enumerate}
The only key ingredient missing to describe the algorithm is the gradient normalization, proposed by \cite{milojkovic2019multi}. This allows us to overcome the issue of having losses with different scales.
\begin{definition}[\textbf{Gradient normalization}]
Let $l_i(w), \ldots, l_k(w)$ be the $k$ objectives and $\nabla_w(l_i(w))$ the gradient of $l_i(w)$ for all $i = 1, ..., k$. We define $w_{init}$ as the initial weight of the model. Then, we normalize the gradient as follows:
\begin{equation}
    \nabla_{w}\overline{l_i(w)} = \frac{\nabla_{w} l_i(w)}{l_i(w_{init})}
\end{equation}
\end{definition}
 We now have all the components to describe the final algorithm. The general idea is:
\begin{enumerate}
\item Calculate and normalize each gradient;
\item Find the common descent direction through QCOP;
\item Update gradients by performing the descent step;
\item Repeat for an appropriate number of batches and epochs.
\end{enumerate}
The pseudocode is provided in the appendix. The procedure is model-agnostic, so long as the model supports gradient-based optimization. In particular, unlike other methods which require convexity or are based on specific optimization algorithms, this method works well with neural networks as well. This is note-worthy because increasingly many real-world applications use complex non-convex models.

The key to using the algorithm is implementing fairness notions as loss functions, which is where our hyperbolic tangent relaxation comes into play.

\section{Experiments}

In this section, we assess the performance of our method based on experiments on four publicly available datasets.

\subsection{Datasets} 
We use the following datasets:
\begin{itemize}
\item \textbf{Adult}~\citep{Dua:2019}: the task is to predict if income is above or below 50k\$. Among the 14 features are attributes gender and race. We use \textit{sex} and a binarized version of \textit{race} as sensitive attributes. $y=1$ corresponds to the favorable prediction (income$\geq$50k\$). There are a total of 48,842 instances; 
\item \textbf{Compas}~\citep{propublica16}: the task is to predict if a defendant will racedeviate. There are 53 attribute, among them \textit{race} and \textit{sex}, which we use as sensitive attributes. There are 6,167 samples in total;
\item \textbf{Dutch census}~\citep{vzliobaite2011handling}: Census data of the Netherlands in 2001. Occupation is used as a proxy for low and high income, and \textit{sex} is used as a sensitive attribute. The data contains 60,420 instances with 12 features;
\item \textbf{Celeb attributes}~\citep{liu2015deep}: it is a dataset containing 202,599 face images of celebrities. This is accompanied by a list of 40 binary attributes for each image. We use this attribute dataset for classification, with the attribute \textit{smiling} used as a label, and \textit{sex} as a sensitive attribute. 
\end{itemize}
For the Compas dataset we use 3,000 samples for training, 2,000 for validation and the rest for testing. For the others we use 10,000 samples for training, 5,000 for validation, and the rest for testing.
\subsection{Baselines} 
\label{baselines}

\paragraph{Two Objectives.} We consider three baselines: a constrained optimization method with the linear relaxation of \cite{zafar2017linear}; the recent method of \cite{cotter2019two} for solving the lagrangian, and the searchFair algorithm of \cite{lohaus20}. We directly report the results of our baselines from \cite{lohaus20}. As the authors provide all experimental details necessary, we ensured to use precisely the same setting to be able to compare the relaxation-based approaches. In particular, we use the same sizes for training and test sets and the number of runs, as well as the same sets of features and pre-processing.

\paragraph{Beyond Two Objectives.} For more than two objectives, we cannot compare against traditional debiasing algorithms. In this case, we employ the following baselines:
\begin{itemize}
    \item \textbf{Sum of losses:} Multiple models with a single objective optimization. We represent the final objective as the sum of all objectives;
    \item \textbf{Unconstrained model:} A model without any constraint regarding fairness.
\end{itemize}

\subsection{Objectives} 

We recall that our algorithm solves the optimization problem described in Equation~\ref{eq:opt}. When optimizing for a single sensitive attribute for a single measure of fairness, we have two objectives: $\ell_1$ and $\ell_2$.
$\ell_1$ is the \textbf{performance objective}, for which we use the binary cross-entropy (BCE), and $\ell_2$ is the \textbf{fairness objective}. $\ell_2$ corresponds to the hyperbolic tangent relaxation of the fairness notion along with BCE added as a regularizer. For the DDP, the fairness objective is: 
\begin{equation}
\label{eq:fairness-objective}
    \ell_2 = \mathrm{HTR}_{\widehat{DDP}}(f) + \lambda*BCE(f)
\end{equation}
where $\lambda$ is the binary cross-entropy regularizer.
The regularizer is needed to avoid trivial constant solutions that attain perfect fairness, hence taking the fairness loss to zero.

\paragraph{The choice of the constant c in the HTR relaxation.} Recall that the HTR relaxation defined in Equation~\ref{eq:htr} requires the specification of a constant $c$ which allows us to decide how closely we want to approximate the true fairness. There is a trade-off between the behaviour of the gradient and the value of $c$. A higher value of $c$ gives a better approximation of the fairness value but a worse-behaving gradient. We choose $c=3$ based on the empirical results. Exploring the impact of this constant on the optimization process for various relaxations could be an interesting direction of future work.
%In order to avoid trivial constant solutions that attain perfect fairness, hence taking the fairness loss to zero, we add the BCE as a regularizer.
%The BCE is added as a regularizer in the fairness loss to avoid trivial constant solutions that attain perfect fairness hence taking the fairness loss to zero. 

For each additional sensitive attribute or fairness notion we want to optimize for, we add an analogous fairness objective. In other terms, the hyperbolic tangent relaxation of the fairness notion in question, with the BCE as a regularization term.
%:
%\textcolor{red}{TODO Is this correct? You might need to add around Eq 1 the definition of A (or use \{a1, ..., at\} if you are happy with., the set of attributes}
%\begin{equation}
%    \ell_2 = \sum_{a \in \{a_1, ..., a_t\}}\mathrm{HTR}^a_{\widehat{DDP}}(f) + \lambda*BCE(f)
%\end{equation}

\subsection{Metrics}  
\label{metrics}

\paragraph{Single Fairness.} The goal is to learn classifiers that give the best improvement in fairness for the least decrease in accuracy, compared to the unconstrained model. We report the fairness difference metric (DDP or DEO) and the accuracy. We emphasize that DDP and DEO are representative choices, and the algorithm supports an array statistical parity based metrics. See Appendix~\ref{app:metrics} for details.

\paragraph{Multi-Fairness.} When having more than one sensitive attribute and/or fairness notion, a single point solution is not representative of the overall performance. Therefore, we compare the pareto fronts instead, that we denote by $S$. The pareto front consists of a set of points in $\mathbb{R}^k$, where $k$ is the number of objectives.

\paragraph{Constructing the pareto front.} The pareto front is constructed through a single training run of the algorithm. After each epoch, the trained model is added to the pareto front if it is pareto-optimal with respect to every point in the existing pareto front. The same method is used for the multi-objective algorithm. While doing several runs for our algorithm to construct the pareto front would make our results look stronger, we have avoided this to not give our approach an undue advantage over methods that do not have trade-off parameters.

As metrics, we employ the hypervolume and the spread of the pareto front:

%One of the ways of measuring the quality of the Pareto set isto measure the area that is dominated by it. The intuition is,the larger the area the solution can dominate, the better thesolution. Using the hypervolume to compute the area domi-nated by to solution, this intuition can be extended to morethan two dimensions (Zitzler and Thiele 1999). Since we areinterested  in  increasing  the  recommender  system  metrics,we are using the origin as a reference point for computingthe  hypervolume.  

\begin{itemize}
    \item \textbf{Hypervolume}~\citep{zitzler2007hypervolume}: the dominance volume enclosed by the pareto set in $\mathbb{R}^k$ with respect to the reference point. The larger the hypervolume, the better the solution. For our purpose, the reference point is always the origin;
    
    \item \textbf{Spacing}~\citep{okabe2003critical}: the spacing of~$S$ is a measure of how spread out the pareto front is. Spacing is low when the solutions are all in a single cluster, and high when they form a spread out pareto front. Formally, the spacing is defined as:
\begin{equation}
SP(S)=\sqrt{\frac{1}{|S-1|} \sum_{i=1}^{|S|}\left(d_{i}-\bar{d}\right)^{2}}
\end{equation}
where $d_i$ is the shortest $l_1$-norm from $s_i$ to any other point in $S$:
\begin{equation}
    d_{i}=\min _{s_{r} \in S, s_{r} \neq s_{i}} \sum_{m=1}^{k}\left|l_{m}\left(s_{i}\right)-l_{m}\left(s_{r}\right)\right|
\end{equation}
\end{itemize}

\subsection{Solution Selection} 
Selecting the best solution from the pareto front of a single run is nontrivial. \cite{wang2017application} list several strategies of selecting a point from the pareto front. Here we use the Linear Programming Technique for Multidimensional Analysis of Preference (LINMAP) method proposed by \cite{srinivasan1973linear}. LINMAP selects the point in the pareto front closest to ideal point. We choose this strategy as we can expect it to not favour a particular objective and give a model that finds a good trade-off between different objectives.
 
We use a training, validation, and test set for each run of the multi-objective algorithm. For each run, the model trained on the training set is evaluated on the validation set first, and the LINMAP strategy is used on the results of validation set to select the final point. The model corresponding to this point is the chosen model for each run and used for evaluation of the test samples. In this manner, we ensure that we are not fitting to the test samples for the results.
 
\begin{figure*}[ht] 
	\centering
	  \begin{subfigure}[t]{0.9\textwidth}
		\centering
		\includegraphics[width=1\linewidth]{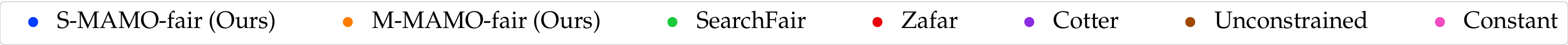}
	\end{subfigure}
\\
	\begin{subfigure}[t]{0.245\textwidth}
		\centering
		\includegraphics[width=1\linewidth]{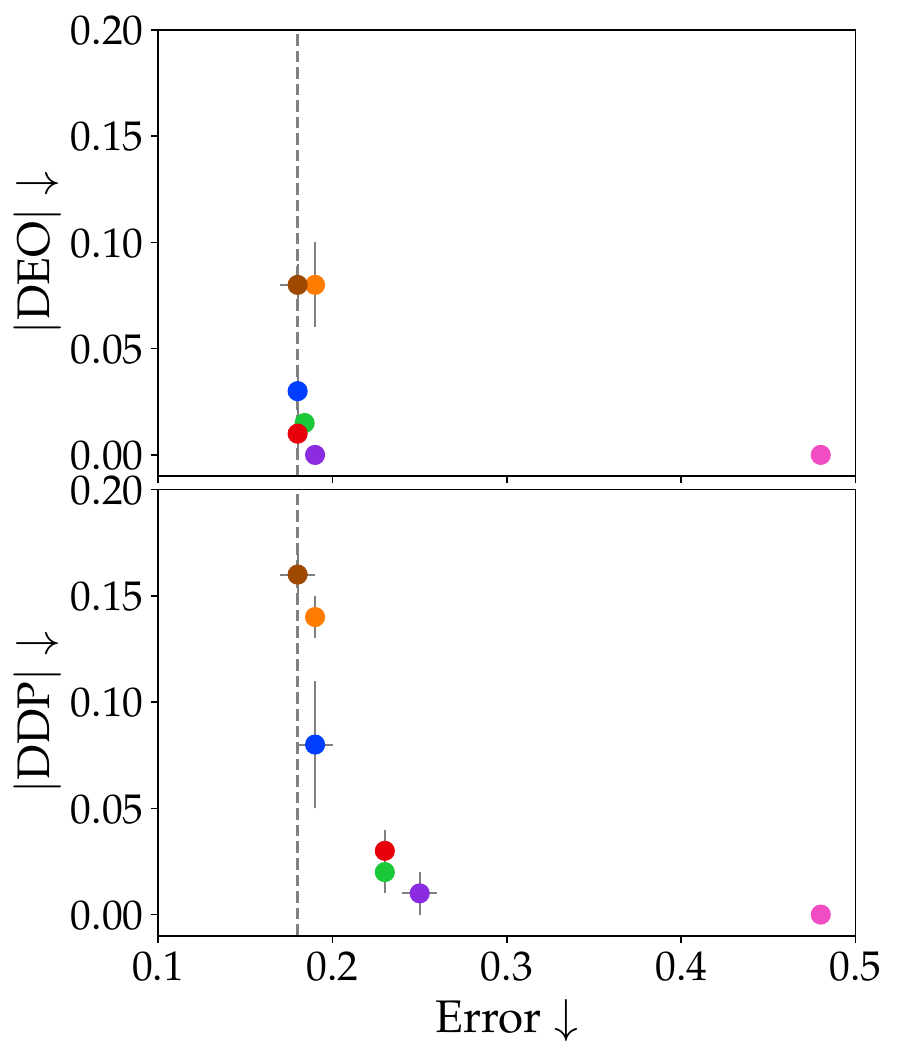}
		\caption{Dutch}
		\label{fig:dutch}
	\end{subfigure}
	\begin{subfigure}[t]{0.244\textwidth}
		\centering
		\includegraphics[width=1\linewidth]{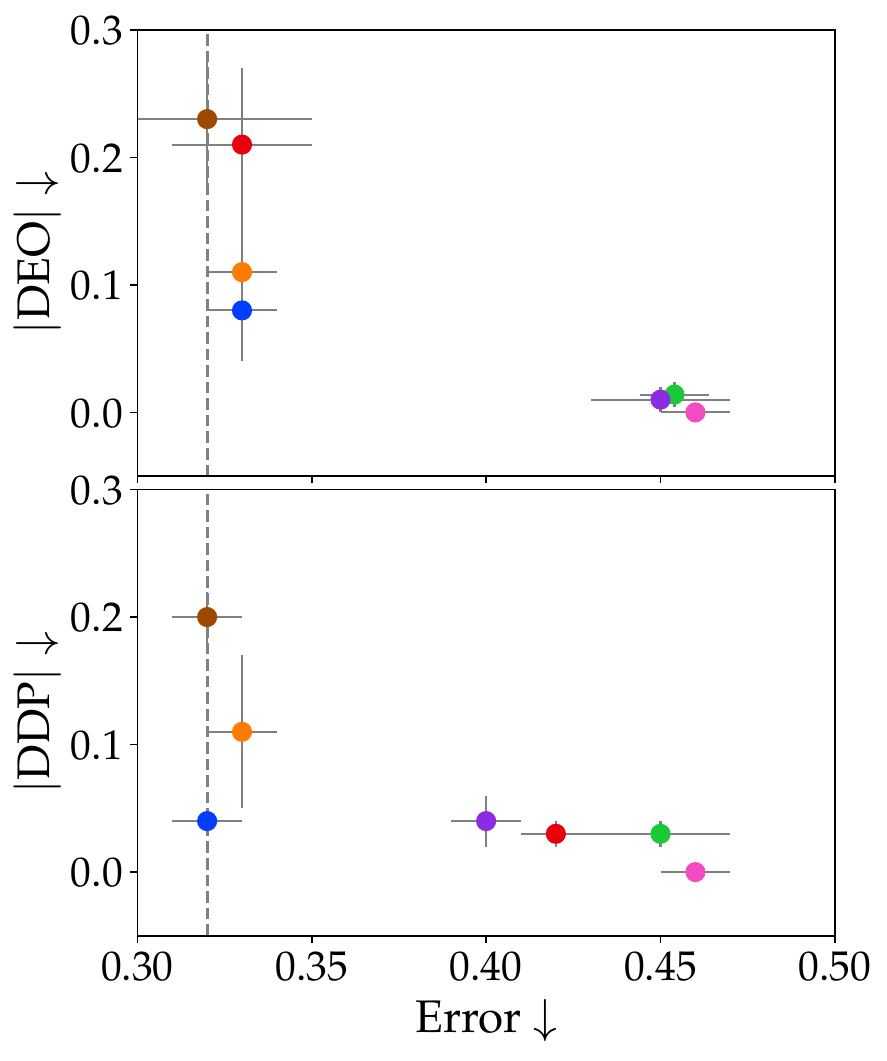}
		\caption{Compas}
		\label{fig:compas}
	\end{subfigure}
	\begin{subfigure}[t]{0.245\textwidth}
		\centering
		\includegraphics[width=1\linewidth]{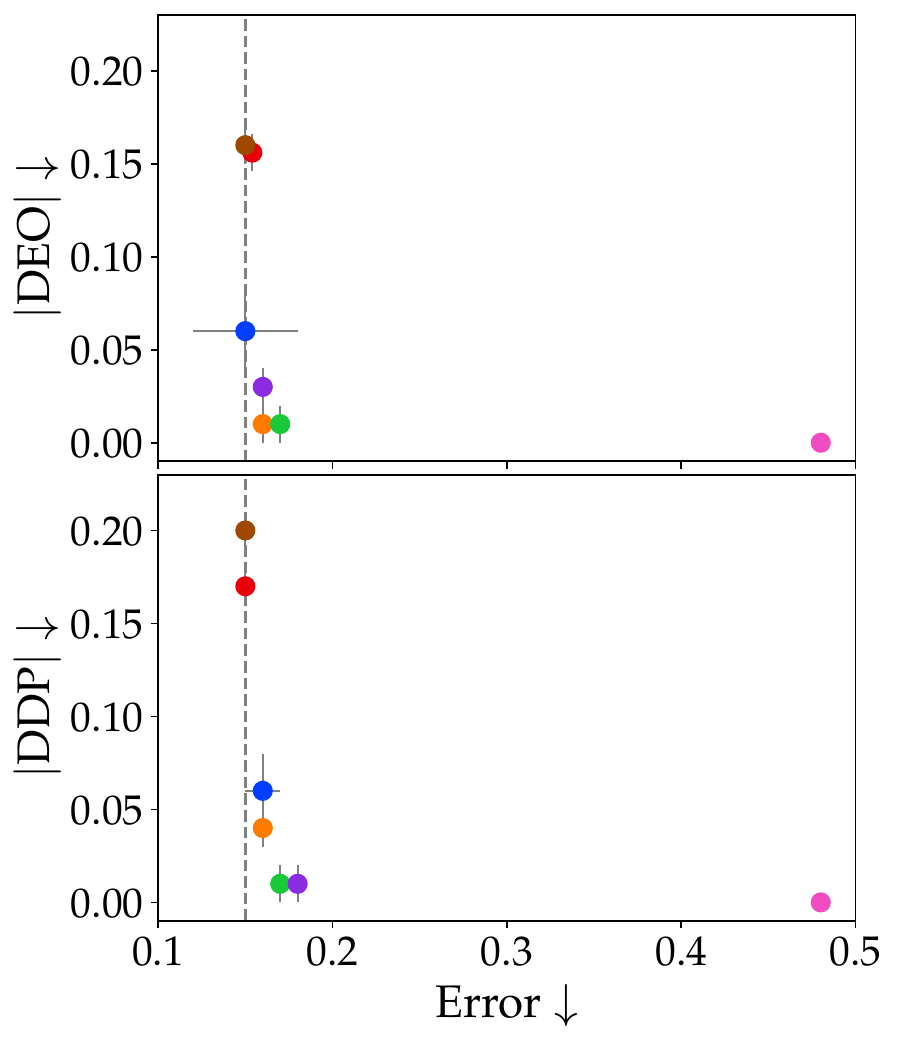}
		\caption{CelebA}
		\label{fig:celeb}
	\end{subfigure}
	\begin{subfigure}[t]{0.234\textwidth}
		\centering
		\includegraphics[width=1\linewidth]{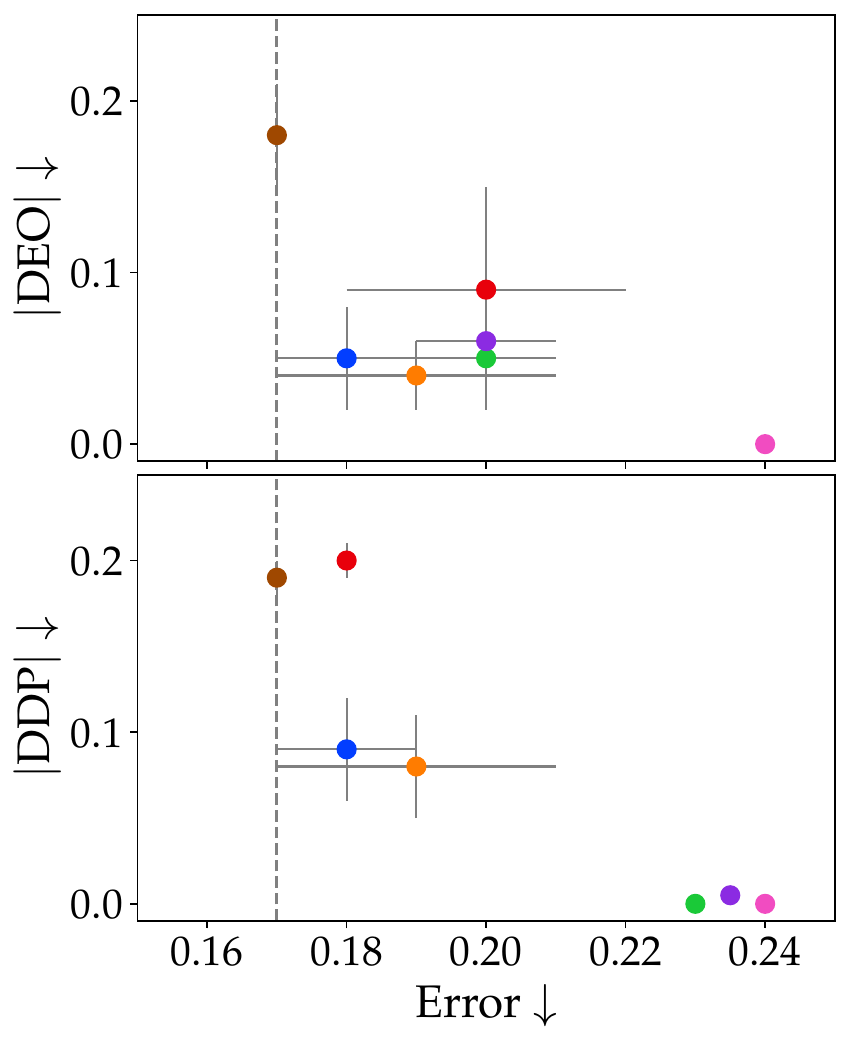}
		\caption{Adult}\label{fig:adult}
	\end{subfigure}
	\caption[Single-fairness results]{Single fairness results. The first and second rows contain the results for the DEO and DPP fairness metric, respectively. For both axes, a lower value is better. So the closer a point to the origin, the better the method. We emphasize that for each dataset, \textbf{M-MAMO-fair is optimized simultaneously for both DDP and DEO}. The dotted line shows the error of the unconstrained model. The closer a point is to the dotted line, the lower is the loss of accuracy suffered by the corresponding method. We see that our method achieves the best error among all methods while significantly improving fairness compared to the unconstrained model. The full tables are available in the appendix.}
\label{fig:compare-algos}
\end{figure*}

\subsection{Optimization Framework} 
We implemented the MAMO-fair algorithm as a publicly-available modular framework which implements most statistical parity based group fairness metrics. All implementation is in pytorch. The full list of implemented objectives is provided in the appendix. The framework is easy to extend by implementing other fairness notions and datasets, with instructions and documentation on how to do so provided with the implementation. This is in addition to the pre-processing and optimization code already available within the framework for the four datasets used in our experiments.

\subsection{Our Models}
We compare the baselines against two variants of our MAMO-fair model:
\begin{itemize}
    \item S-MAMO-fair (Single fairness): the algorithm optimizes for only one notion of fairness at a time; 
    \item M-MAMO-fair (Multi-fairness): the multi-fair MAMO-fair algorithm, where we have a single algorithm optimized simultaneously for DDP and DEO.\footnote{Code is available at https://github.com/kirtanp/MAMO-fair/}
\end{itemize}

One of the strengths of our approach is that it is model-agnostic, so it also works with neural networks unlike other debiasing algorithms~\citep{zafar2017fpr,zafar2017linear,celis2019classification,lohaus20}. We demonstrate it by using a feedforward neural network with 2 hidden layers of sizes 60 and 25 respectively, a ReLu activation function~\citep{xu2015empirical}, dropout~\citep{srivastava2014dropout} with $p=0.2$ between each layer, and a sigmoid at the output~layer.

\subsection{Hyperparameter Selection}
The most important hyperparameter choice is that of $\lambda$ in the fairness objectives (Equation~\ref{eq:fairness-objective}). We found that a value of $\lambda = 0.1$ works well for all datasets and both metrics. We use a batch size of 512 for Adult and Compas, and 200 for Dutch and CelebA. We use a learning rate of 0.01 for all experiments. We did not need to perform automated hyperparameter tuning of our method to achieve results comparable to the baselines. 

\section{Results}
We present the results of our experiments for single and multiple fairness objectives.

\subsection{Single Fairness}
Figure~\ref{fig:compare-algos} shows the results for the case of single-fairness. We see that our algorithm significantly improves on fairness with a very low loss of accuracy on both fairness metrics. While traditional models are optimized for a single fairness notion, we show that when trained on both fairness notions DDP and DEO simultaneously, our model (M-MAMO-fair) achieves higher performance on two out of four cases. 

\paragraph{Least Loss of Accuracy.} First, we see that our algorithm (S-MAMO-fair) always has the least loss of accuracy among all the methods. Second, we observe that whenever another algorithm matches the accuracy achieved by S-MAMO-fair, our model achieves a better performance on fairness. The only exception to this among the eight experiments is in Figure~\ref{fig:dutch} (DEO), where Zafar performs marginally better than the S-MAMO-fair algorithm with the same loss of accuracy. In Figure~\ref{fig:celeb} Zafar has a slightly better accuracy than our methods, but with a much worse fairness value.
\paragraph{Good Trade-off between Error and Fairness.} Methods that have a better performance than S-MAMO-fair on fairness often lose out significantly in the accuracy and end up being close to the trivial constant model. This is most clearly seen in results for the Adult and Compas datasets. For the Dutch and CelebA datasets, all methods perform well on fairness, but S-MAMO-fair still achieves the best accuracy, suggesting that these datasets are easier to debias than Adult and Compas.
\paragraph{Multi-Fairness Works Well.} Interestingly, we note that our multi-fairness algorithm outperforms single-fairness baselines in half of the cases. In particular, for the Adult and CelebA datasets, the M-MAMO-fair algorithm performs very close to the S-MAMO-fair algorithm and gives a better accuracy and better fairness than the baseline methods.
%Interestingly, we note that our multi-fairness algorithm outperforms single-fairness baselines in half of the cases. In particular, for the Adult and CelebA datasets, the multi-fairness MAMO-fair algorithm performs very close to the single-fairness MAMO-fair algorithm, and gives a better accuracy as well as better fairness than the baseline methods.

\paragraph{Inherent Limitations of Multi-Fairness.} For the Compas dataset, M-MAMO-fair performs well for DEO but not for DDP, which is in line with the \textit{impossibility results for fairness}: it is not possible to satisfy DP and error rate based metrics simultaneously if the base rate of classification is different for different groups~\citep{corbett2017algorithmic,goel2018non}. This explains the poor performance of the M-MAMO-fair algorithm on the Dutch dataset as well as the fact that it performs well only on DEO and not on DDP for the Compas dataset. However, this makes multi-objective algorithms for fairness even more essential, so as to find the best possible trade-offs between different fairness metrics, which our algorithm is shown to do well. The parameter $\lambda$ in the fairness objective (Equation \ref{eq:fairness-objective}) can be used to control the trade-off.
\begin{table}[t]
\centering
\caption[Multiple sensitive attributes for Compas]{Compas: simultaneously for race and gender.}
\label{ref:compas-multi-atr}
\begin{tabular}{@{}lccc@{}}
\toprule
& M-MAMO-fair & Sum of Losses & Unconstrained \\
\midrule
HV & \textbf{0.61 $\pm$ 0.01} & 0.55 $\pm$ 0.06 & 0.34 $\pm$ 0.03  \\
SP & \textbf{0.21 $\pm$ 0.09} & 0.05 $\pm$ 0.04 & 0.02 $\pm$ 0.01\\
\bottomrule
\end{tabular}
\end{table}
\subsection{Multi-Fairness}
Here we further illustrate the power of the algorithm to debias simultaneously for multiple sensitive attributes. Two of the datasets, Compas and Adult, contain both race and gender as sensitive attributes. For each dataset, we debias with respect to demographic parity simultaneously for race and gender. The metrics and baselines are as defined in Section \ref{metrics} and Section \ref{baselines} respectively. Table~\ref{ref:compas-multi-atr} and Table~\ref{ref:adult-multi-atr} show that our method outperforms the baselines on both metrics.

\begin{table}[t]
\caption[Multiple sensitive attributes for Adult]{Adult: simultaneously for race and gender.}
\label{ref:adult-multi-atr}
\centering
\begin{tabular}{@{}lccc@{}}
\toprule
& M-MAMO-fair & Sum of Losses & Unconstrained \\
\midrule
HV & \textbf{0.60 $\pm$ 0.09} & 0.30 $\pm$ 0.02 & 0.31 $\pm$ 0.05  \\
SP & \textbf{0.04 $\pm$ 0.02} & 0.02 $\pm$ 0.01 & 0.02 $\pm$ 0.01\\
\bottomrule
\end{tabular}
\end{table}

\section{Conclusion}
In this paper, we addressed the important problem of social discrimination in machine learning classifiers. We considered a specific class of debiasing algorithms which looks at relaxations of fairness notions. We have empirically shown that existing relaxations do not approximate the true fairness value well enough. 

Motivated by this, we proposed new relaxations which provably approximate fairness notions better than existing ones. In addition, we observed that debiasing is a naturally multi-objective problem, but there is a dearth of research in the field of multi-objective debiasing algorithms. We have taken a first step towards alleviating this scarcity by proposing a model-agnostic multi-objective method for finding fair and accurate classifiers. We demonstrated through experiments on four real-world publicly available datasets that our algorithm performs better than current state-of-the-art models at finding trade-offs between accuracy and fairness. Moreover, it can be used to simultaneously debias for multiple definitions of fairness and multiple sensitive attributes.

\newpage
\clearpage
\appendix

\begin{table*}[t]
\centering
\caption{\textbf{Results Table}: MF1 is the MAMO-fair algorithm optimizing separately for DEO and DDP, and MF2 is the algorithm optimizing simultaneously for DDP and DEO. SFa is the SearchFair algorith, Zaf is Zafar, Cot is Cotter, Unc is the unconstrained model and Con is the constant model}
\label{tab:1}
\begin{tabular}{@{}lcccc|cccc@{}}
\toprule
& \multicolumn{4}{c}{Adult} & \multicolumn{4}{c}{Compas} \\
\cmidrule(lr){2-5} \cmidrule(lr){6-9}
& \multicolumn{2}{c}{\textbf{Demographic parity}}  & \multicolumn{2}{c}{\textbf{Equality of opportunity}} & \multicolumn{2}{c}{\textbf{Demographic parity}}  & \multicolumn{2}{c}{\textbf{Equality of opportunity}} \\
\cmidrule(lr){2-3} \cmidrule(lr){4-5} \cmidrule(lr){6-7} \cmidrule(lr){8-9}
& $|\textrm{DDP}|$ & Error & $|\textrm{DEO}|$ & Error & $|\textrm{DDP}|$ & Error & $|\textrm{DEO}|$ & Error  \\
\midrule
MF1 &  0.09 $\pm$ 0.03 & \textbf{0.18 $\pm$ 0.01}   & 0.05 $\pm$ 0.03 & \textbf{0.18 $\pm$ 0.01}  & 0.04 $\pm$ 0.01 & \textbf{0.32 $\pm$ 0.01} & 0.08 $\pm$ 0.04 & \textbf{0.33 $\pm$ 0.01} \\
MF2 & 0.08 $\pm$ 0.03  & 0.19 $\pm$ 0.02   & 0.04 $\pm$ 0.02  & 0.19 $\pm$ 0.02  & 0.11 $\pm$ 0.06  & 0.33 $\pm$ 0.01 & 0.11 $\pm$ 0.07  & 0.33 $\pm$ 0.01  \\
\midrule
SFa & 0.00 $\pm$ 0.00 & 0.24 $\pm$ 0.00 & 0.05 $\pm$ 0.03 & 0.20 $\pm$ 0.01 & \textbf{0.03 $\pm$ 0.01} & 0.45 $\pm$ 0.02 & 0.01 $\pm$ 0.01 & 0.45 $\pm$ 0.01 \\  
Zaf & 0.20 $\pm$ 0.01 & 0.18 $\pm$ 0.00 &  0.09 $\pm$ 0.06 & 0.20 $\pm$ 0.02 & \textbf{0.03 $\pm$ 0.01} & 0.42 $\pm$ 0.01 & 0.21 $\pm$ 0.06 & 0.33 $\pm$ 0.02  \\
Cot & 0.00 $\pm$ 0.00 & 0.24 $\pm$ 0.00 & 0.06 $\pm$ 0.04 & 0.20 $\pm$ 0.01 & 0.04 $\pm$ 0.02 & 0.40 $\pm$ 0.01 & 0.01 $\pm$ 0.01 & 0.45 $\pm$ 0.02 \\
Unc & 0.19 $\pm$ 0.01 & 0.17 $\pm$ 0.00  & 0.18 $\pm$ 0.03 & 0.17 $\pm$ 0.00  & 0.20 $\pm$ 0.02 & 0.32 $\pm$ 0.01 & 0.23 $\pm$ 0.05 & 0.32 $\pm$ 0.03  \\
Con & 0.00 $\pm$ 0.00 & 0.24 $\pm$ 0.00 &  0.00 $\pm$ 0.00 & 0.24 $\pm$ 0.00 & 0.00 $\pm$ 0.00 & 0.46 $\pm$ 0.01 & 0.00 $\pm$ 0.00 & 0.46 $\pm$ 0.01 \\
\bottomrule
\end{tabular}
\end{table*} 

\begin{table*}[t]
\centering
\caption{\textbf{Results Table}: MF1 is the MAMO-fair algorithm optimizing separately for DEO and DDP, and MF2 is the algorithm optimizing simultaneously for DDP and DEO. SFa is the SearchFair algorith, Zaf is Zafar, Cot is Cotter, Unc is the unconstrained model and Con is the constant model}
\label{tab:2}
\begin{tabular}{@{}lcccc|cccc@{}}
\toprule
& \multicolumn{4}{c}{Dutch} & \multicolumn{4}{c}{CelebA} \\
\cmidrule(lr){2-5} \cmidrule(lr){6-9}
& \multicolumn{2}{c}{\textbf{Demographic parity}}  & \multicolumn{2}{c}{\textbf{Equality of opportunity}} & \multicolumn{2}{c}{\textbf{Demographic parity}}  & \multicolumn{2}{c}{\textbf{Equality of opportunity}} \\
\cmidrule(lr){2-3} \cmidrule(lr){4-5} \cmidrule(lr){6-7} \cmidrule(lr){8-9}
& $|\textrm{DDP}|$ & Error & $|\textrm{DEO}|$ & Error & $|\textrm{DDP}|$ & Error & $|\textrm{DEO}|$ & Error  \\
\midrule
MF1 &  0.08 $\pm$ 0.03 & \textbf{0.19 $\pm$ 0.01}  & 0.03 $\pm$ 0.01 & \textbf{0.18 $\pm$ 0.00}  & 0.06 $\pm$ 0.02 & 0.16 $\pm$ 0.01 & 0.06 $\pm$ 0.02  & \textbf{0.15 $\pm$ 0.03} \\
MF2 & 0.14 $\pm$ 0.01   & 0.19 $\pm$ 0.00   & 0.08 $\pm$ 0.02   & 0.19 $\pm$ 0.00   & 0.04 $\pm$ 0.01  & 0.16 $\pm$ 0.00 & 0.01 $\pm$ 0.01  & 0.16 $\pm$ 0.00  \\
\midrule
SFa & \textbf{0.02 $\pm$ 0.01} & 0.23 $\pm$ 0.00 & \textbf{0.01 $\pm$ 0.00} & 0.18 $\pm$ 0.00  & 0.01 $\pm$ 0.01 & 0.17 $\pm$ 0.00 & \textbf{0.01 $\pm$ 0.01} & 0.17 $\pm$ 0.00 \\  
Zaf & 0.03 $\pm$ 0.01 & 0.23 $\pm$ 0.00 & 0.01 $\pm$ 0.01 &  0.18 $\pm$ 0.00 & 0.17 $\pm$ 0.01 & 0.15 $\pm$ 0.00 & 0.16 $\pm$ 0.01 & 0.15 $\pm$ 0.00  \\
Cot & 0.01 $\pm$ 0.01 & 0.25 $\pm$ 0.01 & \textbf{0.00 $\pm$ 0.00} & 0.19 $\pm$ 0.00 & \textbf{0.01 $\pm$ 0.01} & 0.18 $\pm$ 0.00 & 0.03 $\pm$ 0.01 & 0.16 $\pm$ 0.00 \\
Unc & 0.16 $\pm$ 0.01   & 0.18 $\pm$ 0.01   & 0.08 $\pm$ 0.01   & 0.18 $\pm$ 0.01  & 0.20 $\pm$ 0.01 & 0.15 $\pm$ 0.00 & 0.16 $\pm$ 0.01 & 0.15 $\pm$ 0.00  \\
Con & 0.00 $\pm$ 0.00 & 0.48 $\pm$ 0.00 &  0.00 $\pm$ 0.00 & 0.48 $\pm$ 0.00 & 0.00 $\pm$ 0.00 & 0.48 $\pm$ 0.00 & 0.00 $\pm$ 0.00 & 0.48 $\pm$ 0.00 \\
\bottomrule
\end{tabular}
\end{table*}

\section{Toy dataset description}
Figure \ref{fig:toy-dta} provides a visualization of the toy dataset used in comparing the relaxations in Figure 1 in the main paper.

\begin{figure}[h]
\centerline{\includegraphics[scale=0.2]{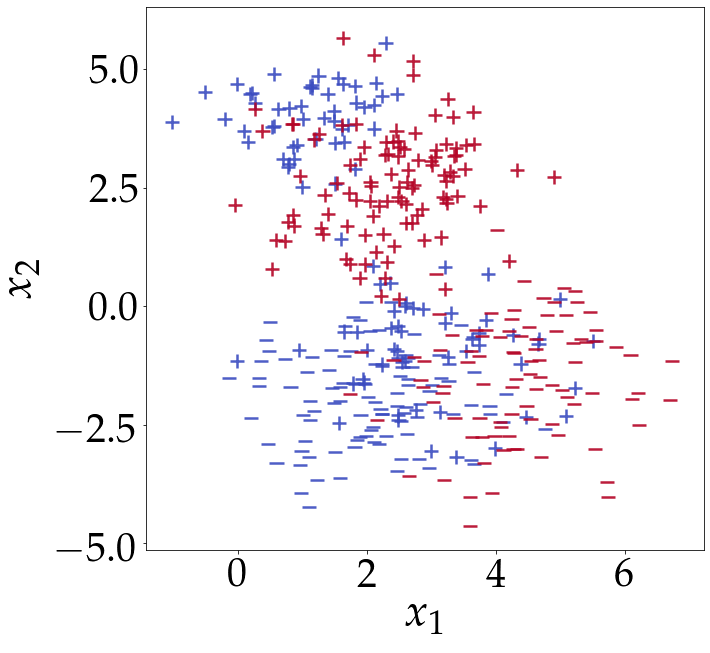}}
\caption[Toy Dataset]{A visualization of the toy dataset used in Figure 1 in the main paper. The class labels are $(+)$ and $(-)$. The color represents group membership for a binary sensitive attribute, so the two groups are \textit{red} and \textit{blue}. So the goal is to separate the class labels, and remain fair with respect to the colors. The dataset contains 600 points, but only 400 are shown for clarity.}
\label{fig:toy-dta}
\end{figure}
\paragraph{Dataset construction: } The dataset is taken directly from~\citep{lohaus20}. The points are drawn from various Gaussian distributions.
\begin{itemize}
    \item \textit{Protected sensitive attribute.}  We draw 150 points with a negative label from a Gaussian with mean $\mu_1 = [2, -1]$ and covariance $\Sigma_1 = [[1, 0], [0, 1]]$. For the positive label we draw 150 points from a mixture of two Gaussians, with $\mu_2 = [3, -1]$ and $\Sigma_2 = [[1, 0], [0, 1]]$ and $\mu_3 = [1, 4]$ and $\Sigma_3 = [[0.5, 0], [0, 0.5]]$.
    
    \item \textit{Unprotected sensitive attribute:} For the unprotected sensitive attribute,  we draw 150 points with a positive label from a Gaussian with mean $\mu_4 = [2.5, 2.5]$ and covariance $\Sigma_4 = [[1, 0], [0, 1]]$. For the negative label we draw 150 points from a Gaussian with $\mu_5 = [4.5, -1.5]$ and $\Sigma_5 = [[1, 0], [0, 1]]$.
\end{itemize}

\section{MAMO-fair algorithm}
Here we provide some further details on the multi-objective algorithm described in Section 5 of the main paper.

\begin{figure}[ht]
\centerline{\includegraphics[scale=0.35]{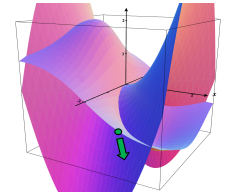}}
\caption[Common descent vector]{The figure gives an intuitive visualization of the common descent vector for two objectives. The two surfaces can be interpreted as loss functions for two objectives. The arrow points to the direction that minimizes both loss functions simultaneously.}
\label{fig:common-descent}
\end{figure}
\begin{algorithm}[ht]
        \caption{Final algorithm with gradient normalization}
        \label{alg:mamo-ap}
        \begin{algorithmic}[1] 
                \For{$i \in 1,...,k$}
                    \State $EL_i = \ell_i(w)$
                \EndFor
    
                \For{$epoch \in 1,...,M$}
                    \For{$batch \in 1,...,B$}
                        \State $forward\_ pass()$
                        \State $evaluate\_ model()$
                        \For{$i \in 1,...,n$}
                            \State $loss = \ell_i(w)$
                            \State $loss\_gradient = \nabla \ell_i(w)$ \vspace{2mm}
                            \State $\nabla\overline{\ell_i(w)} = \frac{\nabla_w\ell_i(w)}{EL_i}$
                        \EndFor
                        
                        \State $\alpha_1,...,\alpha_k = \textrm{QCOPSolver}\left(\nabla_w\overline{\ell_1(w)},...,\nabla_w\overline{\ell_k(w)}\right)$
                        \State $\nabla_wL(w)=\sum_{i=1}^k\alpha_i\nabla_w\overline{\ell_i(w)}$
                        \State $w = w-\eta\nabla_wL(w)$
                    \EndFor
                \EndFor
        \end{algorithmic}
    \end{algorithm}

Figure \ref{fig:common-descent} gives an intuition for a key ingredient of the multi-objective algorithm, the common descent vector. Algorithm \ref{alg:mamo-ap} provides the pseudocode for the algorithm.

\section{Supported Metrics}
\label{app:metrics}
Since the method is based on relaxing the indicator function, it supports all error-rate based metrics. We formally define some of them here. Table 1 in~\citep{celis2019classification} provides an even more complete list. The Figure~\ref{fig:app} defines metrics based on mis-classification rates of the prediction.
\begin{figure}[ht]
\centerline{\includegraphics[scale=0.35]{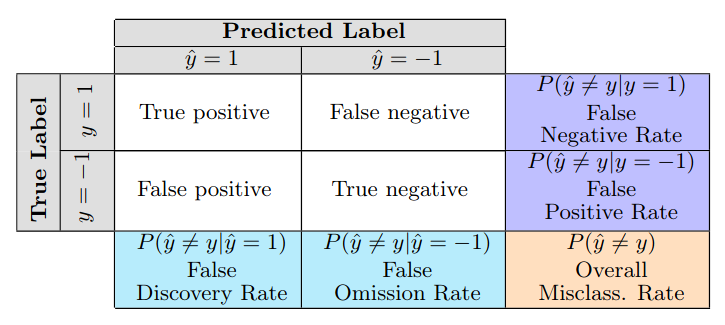}}
\caption[Mis-classification rate based fairness measures]{\label{fig:app}Table from \cite{zafar2017fpr} on disparate mistreatment based measures. The table defines the rates, the measure of fairness corresponding to each rate is the parity of that rate across groups}
\label{fig}
\end{figure}
We formally define some of the supported metrics next to give a general picture. 
\begin{definition}[\textbf{False Positive Rate}]
\label{def:fpr}
Parity of false positive rate
\[
    \Pr(\hat{y} = 1|a = - 1,\, y = - 1) = \Pr(\hat{y} = 1| a = 1,\, y = - 1)
\]
\end{definition}

\begin{definition}[\textbf{False Negative Rate}]
\label{def:fnr}
Parity of false negative rate across groups
\[
    \Pr(\hat{y} = - 1|a = - 1,\, y = 1) = \Pr(\hat{y} = - 1| a = 1,\, y = 1)
\]
\end{definition}

\begin{definition}[\textbf{True Positive Rate}]
\label{def:tpr}
Parity of true positive rates across groups
\[
    \Pr(\hat{y} = 1|a = - 1,\, y = 1) = \Pr(\hat{y} = 1| a = 1,\, y = 1)
\]
\end{definition}

\begin{definition}[\textbf{True Negative Rate}]
\label{def:tnr}
Parity of true positive rate across groups
\[
    \Pr(\hat{y} = - 1|a = - 1,\, y = - 1) = \Pr(\hat{y} = - 1| a = 1,\, y = - 1)
\]
\end{definition}

The relaxation procedure follows the same principle as described in the main content, where each fairness notion is written as a difference of expectation, further relaxed to an empirical estimate of the expectation. As a last step $\mathbbm{1}_{x>0}$ is relaxed to $\tanh(c*\max(0, x))$ and $\mathbbm{1}_{x<0}$ is relaxed to $\tanh(c*\min(0, x))$. Since we are using the relaxation to define a loss function for each measure of fairness, we have access to both $y$ and $\hat{y}$ while calculating the loss value. Therefore the method would also work for metrics such as the \textit{false discovery rate}, where we condition on the predicted value.

\section{Implementation details}
\paragraph{Design choice of the experiments:} The choice of using $10,000$ samples for training was to follow as closely as possible the experimental design of \cite{lohaus20} to make sure that the comparison is sound. They use this choice of sampling for their method and all the baselines, and we follow the same procedure. This choice means that we are using significantly less than $70\%$ of the samples for training, and all the rest for testing. The performance can only be expected to improve when we increase the training proportion and decrease the testing proportion of the dataset. This justifies the choice of $10,000$ samples  even in datasets with many more points (such as celebA).

\paragraph{Computational resources:} 
 All experiments were run on an ordinary laptop (16GB RAM and no GPU). Computational clusters were not used. 

\section{Result Tables}
Table \ref{tab:1} and Table \ref{tab:2} provide full tables for the results described in Figure 2 in the main paper. We note that in a few cases both the error and fairness value are identical for more than one baseline method. In this case we slightly perturb one of the values to ensure that all points are visible in the figure in the main paper. The tables in this appendix provide the values without this perturbation.

\section{Proof of Theorem 1}
Here we provide the proof of Theorem 1 from the main paper. First we give a reminder of the definition of the sign function
\begin{equation}
sign(x) =
  \begin{cases}
  1 & \text{if $x>0$} \\
  -1 & \text{if $x<0$} \\
  0 & \text{if $x=0$}
  \end{cases}    
  \label{eq:sign}
\end{equation}

\begin{observation}
The hyperbolic tangent is an odd function, which is to say that \[\tanh(-x) = -\tanh(x)\]
\label{obs:1}
\end{observation}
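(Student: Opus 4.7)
The plan is to prove the identity $\tanh(-x) = -\tanh(x)$ directly from the exponential definition of the hyperbolic tangent, which is $\tanh(x) = \frac{e^{x} - e^{-x}}{e^{x} + e^{-x}}$. This is a routine symbolic manipulation, so the entire argument amounts to a one-line calculation followed by a factoring step.

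First I would substitute $-x$ into the definition, obtaining $\tanh(-x) = \frac{e^{-x} - e^{-(-x)}}{e^{-x} + e^{-(-x)}} = \frac{e^{-x} - e^{x}}{e^{-x} + e^{x}}$. Next, I would factor $-1$ out of the numerator: $e^{-x} - e^{x} = -(e^{x} - e^{-x})$, while noting that the denominator $e^{-x} + e^{x}$ is symmetric in the sign of $x$ and therefore equals $e^{x} + e^{-x}$. Combining these observations yields $\tanh(-x) = \frac{-(e^{x} - e^{-x})}{e^{x} + e^{-x}} = -\tanh(x)$, which is the desired identity.

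There is essentially no obstacle here beyond agreeing on a definition of $\tanh$. If one preferred to avoid invoking the exponential form explicitly, an alternative route would be to use $\tanh(x) = \frac{\sinh(x)}{\cosh(x)}$ together with the facts that $\sinh$ is odd and $\cosh$ is even; the conclusion then follows from odd divided by even being odd. Either presentation is entirely elementary and fits in one or two lines, so the proposal is simply to state the calculation explicitly and conclude.
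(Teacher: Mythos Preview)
Your proof is correct and entirely standard; the paper itself does not prove this observation at all, simply stating it as a known fact. There is nothing to compare against, and your one-line exponential computation (or the $\sinh/\cosh$ variant) is exactly the expected justification.
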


\begin{observation}[The quotient law of convergent series]
Let $(a_n)$ and $(b_n)$ be convergent series such that $\lim_{n\to \infty}a_n=A$ and $\lim_{n\to \infty}b_n=B$. Then we have
\[
\lim_{n\to \infty}\frac{a_n}{b_n}=\frac{\lim_{n\to \infty}a_n}{\lim_{n\to \infty}b_n} = \frac{A}{B}
\]
provided that $B\neq 0$.
\label{obs:2}
\end{observation}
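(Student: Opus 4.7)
The plan is to prove the quotient law by first establishing a reciprocal law, namely $\lim_{n\to\infty} 1/b_n = 1/B$, and then invoking the product law for convergent sequences to conclude $a_n/b_n = a_n \cdot (1/b_n) \to A \cdot (1/B) = A/B$. This standard decomposition reduces the problem to handling the reciprocal, which is where the nonvanishing hypothesis $B \neq 0$ becomes essential.

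First, I would show that $1/b_n$ is well defined for sufficiently large $n$ and that $|b_n|$ is uniformly bounded away from zero. Applying the definition of convergence of $(b_n)$ with tolerance $|B|/2 > 0$ produces an index $N_1$ such that $|b_n - B| < |B|/2$ for all $n \geq N_1$. The reverse triangle inequality then yields $|b_n| \geq |B| - |b_n - B| > |B|/2 > 0$, so the denominators never vanish past $N_1$ and the reciprocals are well defined.

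Next, I would bound $|1/b_n - 1/B|$ explicitly. Combining fractions gives $1/b_n - 1/B = (B - b_n)/(b_n B)$, hence $|1/b_n - 1/B| = |B - b_n|/(|b_n|\,|B|) \leq 2|B - b_n|/|B|^2$ whenever $n \geq N_1$, using the lower bound from the previous step. Given any target $\varepsilon > 0$, the convergence $b_n \to B$ supplies $N_2$ with $|b_n - B| < \varepsilon |B|^2 / 2$ for $n \geq N_2$. Setting $N = \max(N_1, N_2)$, we obtain $|1/b_n - 1/B| < \varepsilon$ for all $n \geq N$, which establishes $1/b_n \to 1/B$.

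Finally, I would invoke the standard product law for convergent sequences, $\lim (x_n y_n) = (\lim x_n)(\lim y_n)$, applied to $x_n = a_n$ and $y_n = 1/b_n$, to conclude $a_n/b_n \to A/B$. The main obstacle, and the only real content of the theorem, is the reciprocal step; everything else is routine. The hypothesis $B \neq 0$ is used in exactly one place, to pin $|b_n|$ below by $|B|/2$ eventually, and without it the conclusion genuinely fails (as $b_n \to 0$ with $a_n \not\to 0$ illustrates), so the argument is tight.
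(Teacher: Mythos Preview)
Your proof is correct and is the standard $\varepsilon$-$N$ argument for the quotient law: bound $|b_n|$ below via the reverse triangle inequality, deduce $1/b_n \to 1/B$, then apply the product law. There are no gaps.

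However, the paper does not actually prove this observation at all. It simply states that this is ``a commonly used result in real analysis'' and refers the reader to Theorem~C in \cite{freiwald} for a proof. So your proposal supplies a full self-contained argument where the paper is content to cite an external reference; in that sense you have done more than the authors, not less.
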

Observation \ref{obs:2} is a commonly used result in real analysis. See Theorem C in~\citep{freiwald} for a proof.

\begin{theorem}
The hyperbolic tangent of $n*x$ converges to the sign of $x$ for every fixed $x\in \mathbb{R}$ as $n$ goes to infinity. Formally,
\begin{equation}
\lim_{n\to \infty}\tanh(nx)=\sign(x) \, \forall x\in \mathbb{R}
\end{equation}
\end{theorem}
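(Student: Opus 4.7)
The plan is to prove the statement by a straightforward case analysis on the sign of $x$, using the explicit exponential form $\tanh(y) = (e^y - e^{-y})/(e^y + e^{-y})$ together with Observations~\ref{obs:1} and~\ref{obs:2} already stated in the appendix.

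First I would handle the easy case $x = 0$ separately: since $\tanh(n \cdot 0) = \tanh(0) = 0$ for every $n$, the sequence is identically zero, so its limit is $0 = \sign(0)$.

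Next, for the case $x > 0$, I would rewrite
\begin{equation}
\tanh(nx) \;=\; \frac{e^{nx} - e^{-nx}}{e^{nx} + e^{-nx}} \;=\; \frac{1 - e^{-2nx}}{1 + e^{-2nx}},
\end{equation}
where the second equality comes from dividing numerator and denominator by $e^{nx}$. Since $x > 0$ we have $2nx \to \infty$, hence $e^{-2nx} \to 0$. Applying Observation~\ref{obs:2} (the quotient law) with $a_n = 1 - e^{-2nx} \to 1$ and $b_n = 1 + e^{-2nx} \to 1 \neq 0$ yields $\tanh(nx) \to 1 = \sign(x)$.

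Finally, for $x < 0$, I would reduce to the previous case by invoking Observation~\ref{obs:1}: $\tanh(nx) = -\tanh(-nx) = -\tanh(n|x|)$, where $|x| > 0$. By the argument just given, $\tanh(n|x|) \to 1$, so $\tanh(nx) \to -1 = \sign(x)$. Combining the three cases completes the proof. There is no serious obstacle here; the only mild subtlety is remembering to divide through by $e^{nx}$ (rather than leaving the raw exponentials) so that the quotient law applies to a convergent denominator bounded away from zero.
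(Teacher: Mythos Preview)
Your proof is correct and essentially identical to the paper's own proof: both do a three-way case split on the sign of $x$, handle $x>0$ by rewriting $\tanh(nx)=(1-e^{-2nx})/(1+e^{-2nx})$ and applying the quotient law (Observation~\ref{obs:2}), reduce $x<0$ to the positive case via oddness (Observation~\ref{obs:1}), and treat $x=0$ trivially. The only cosmetic difference is the ordering of the cases.
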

\begin{proof}
We know from the definition of the hyperbolic tangent function that 
\begin{equation}
    \tanh(nx) = \frac{1-e^{-2nx}}{1+e^{-2nx}}
    \label{eq:def-tanh}
\end{equation}
The theorem requires pointwise convergence, meaning that the convergence in $n$ should hold for each value of $x$. Therefore $x$ can be though of as a constant for the purpose of the proof. Assuming $x$ to be a constant let $a_n = 1-e^{-2nx}$ and $b_n = 1+e^{-2nx}$. Then we have 
\begin{equation}
    \tanh(nx) = \frac{a_n}{b_n}
    \label{eq:tanh-ratio}
\end{equation}

We divide into cases by the value of $x$. 

\noindent \textbf{Case 1:} $\mathbf{x>0}$. In this case we have $\lim_{n\to \infty}e^{-2nx}=0$. Therefore it follows that $\lim_{n\to \infty}a_n=1$ and $\lim_{n\to \infty}b_n=1$. From Equation \ref{eq:tanh-ratio} we know that $\tanh(nx)$ is a ratio of $a_n$ and $b_n$. Therefore it follows from Observation \ref{obs:2} that 
\[
    \lim_{n\to \infty}\tanh(nx) = \frac{\lim_{n\to \infty}a_n}{\lim_{n\to \infty}b_n} = 1
\]

\noindent \textbf{Case 2:} $\mathbf{x<0}$. Since $x<0$, we have $-x>0$. Therefore from case 1 we know $\lim_{n\to \infty}\tanh(n(-x)) = 1$. We have from Observation \ref{obs:1} that $\tanh(-nx) = -\tanh(nx)$. Therefore, 
\[
\lim_{n\to \infty}\tanh(nx) = -\lim_{n\to \infty}\tanh(n(-x)) = -1
\]

\noindent \textbf{Case 3:} $\tanh(nx) = 0$ for $x=0$. Therefore
\[
    \lim_{n\to \infty}\tanh(nx) = 0
\]
Putting the three cases together we have
\[
  tanh(nx) =
  \begin{cases}
  1 & \text{if $x>0$} \\
  -1 & \text{if $x<0$} \\
  0 & \text{if $x=0$}
  \end{cases}
\]
This is identical to the definition of the sign function (Equation \ref{eq:sign}. Therefore,
\[
    \lim_{n\to \infty}\tanh(nx)=\sign(x) \, \forall x\in \mathbb{R}
\]
\end{proof}

\section{Proof of Lemma 1}
\begin{lemma}
$\tanh(n*\max(0, x))$ converges to the indicator function of $x>0$ as $n$ goes to infinity. Formally,
\begin{equation}
\lim_{n\to \infty}\tanh(n*\max(0, x))=\mathbbm{1}_{x>0} \, \forall x\in \mathbb{R}
\end{equation}
\end{lemma}
\begin{proof}
We know from Theorem~\ref{thm:tanh-sign} that 
\begin{equation}
    \lim_{n\to \infty}\tanh(n*\max(0, x))=\sign(\max(0, x))
\end{equation}
\paragraph{Case 1:}$x>0$. When $x>0$, $\max(0, x) = x$. Therefore we have $\sign(\max(0, x)) = \sign(x) = 1$. So $\sign(\max(0, x))=1$ when $x>0$.
\paragraph{Case 2:}$x\leq 0$. When $x\leq0$, $\max(0, x) = 0$ and therefore $\sign(\max(0, x))=0$.

So we have that $\sign(\max(0, x))=0$ for $x\leq 0$ and $\sign(\max(0, x))=1$ for $x>0$. But this is by definition the indicator function of $x>0$, $\mathbbm{1}_{x>0}$. Hence, $\sign(\max(0, x))=\mathbbm{1}_{x>0}$ and we can conclude that $\lim_{n\to \infty}\tanh(n*\max(0, x))=\mathbbm{1}_{x>0}$.
\end{proof}


\begin{thebibliography}{34}
\providecommand{\natexlab}[1]{#1}
\providecommand{\url}[1]{\texttt{#1}}
\expandafter\ifx\csname urlstyle\endcsname\relax
  \providecommand{\doi}[1]{doi: #1}\else
  \providecommand{\doi}{doi: \begingroup \urlstyle{rm}\Url}\fi

\bibitem[Angwin et~al.(2016)Angwin, Larson, Mattu, and Kirchner]{propublica16}
Julia Angwin, Jeff Larson, Surya Mattu, and Lauren Kirchner.
\newblock Machine bias.
\newblock \emph{ProPublica, May}, 23:\penalty0 2016, 2016.

\bibitem[Baer(2016)]{baer16}
Drake Baer.
\newblock The ‘filter bubble’explains why trump won and you didn’t see it
  coming.
\newblock \emph{New York Magazine. Retrieved from
  http://nymag.com/scienceofus/2016/11/how-facebook-and-the-filter-bubble-pushed-trump-to-victory.html},
  2016.

\bibitem[Barocas et~al.(2019)Barocas, Hardt, and
  Narayanan]{barocas-hardt-narayanan}
Solon Barocas, Moritz Hardt, and Arvind Narayanan.
\newblock \emph{Fairness and Machine Learning}.
\newblock fairmlbook.org, 2019.
\newblock \url{http://www.fairmlbook.org}.

\bibitem[Calmon et~al.(2017)Calmon, Wei, Vinzamuri, Ramamurthy, and
  Varshney]{calmon2017optimized}
Flavio~P Calmon, Dennis Wei, Bhanukiran Vinzamuri, Karthikeyan~Natesan
  Ramamurthy, and Kush~R Varshney.
\newblock Optimized pre-processing for discrimination prevention.
\newblock In \emph{Proceedings of the 31st International Conference on Neural
  Information Processing Systems}, pages 3995--4004, 2017.

\bibitem[Celis et~al.(2019)Celis, Huang, Keswani, and
  Vishnoi]{celis2019classification}
L~Elisa Celis, Lingxiao Huang, Vijay Keswani, and Nisheeth~K Vishnoi.
\newblock Classification with fairness constraints: A meta-algorithm with
  provable guarantees.
\newblock In \emph{Proceedings of the Conference on Fairness, Accountability,
  and Transparency}, pages 319--328, 2019.

\bibitem[Chierichetti et~al.(2017)Chierichetti, Kumar, Lattanzi, and
  Vassilvitskii]{chierichetti2017fair}
Flavio Chierichetti, Ravi Kumar, Silvio Lattanzi, and Sergei Vassilvitskii.
\newblock Fair clustering through fairlets.
\newblock In \emph{Advances in Neural Information Processing Systems}, pages
  5029--5037, 2017.

\bibitem[Chzhen et~al.(2019)Chzhen, Denis, Hebiri, Oneto, and
  Pontil]{chzhen2019leveraging}
Evgenii Chzhen, Christophe Denis, Mohamed Hebiri, Luca Oneto, and Massimiliano
  Pontil.
\newblock Leveraging labeled and unlabeled data for consistent fair binary
  classification.
\newblock \emph{arXiv preprint arXiv:1906.05082}, 2019.

\bibitem[Corbett-Davies et~al.(2017)Corbett-Davies, Pierson, Feller, Goel, and
  Huq]{corbett2017algorithmic}
Sam Corbett-Davies, Emma Pierson, Avi Feller, Sharad Goel, and Aziz Huq.
\newblock Algorithmic decision making and the cost of fairness.
\newblock In \emph{Proceedings of the 23rd acm sigkdd international conference
  on knowledge discovery and data mining}, pages 797--806, 2017.

\bibitem[Cotter et~al.(2019)Cotter, Jiang, and Sridharan]{cotter2019two}
Andrew Cotter, Heinrich Jiang, and Karthik Sridharan.
\newblock Two-player games for efficient non-convex constrained optimization.
\newblock In \emph{Algorithmic Learning Theory}, pages 300--332. PMLR, 2019.

\bibitem[D{\'e}sid{\'e}ri(2012)]{desideri2012multiple}
Jean-Antoine D{\'e}sid{\'e}ri.
\newblock Multiple-gradient descent algorithm (mgda) for multiobjective
  optimization.
\newblock \emph{Comptes Rendus Mathematique}, 350\penalty0 (5-6):\penalty0
  313--318, 2012.

\bibitem[Donini et~al.(2018)Donini, Oneto, Ben-David, Shawe-Taylor, and
  Pontil]{donini2018empirical}
Michele Donini, Luca Oneto, Shai Ben-David, John~S Shawe-Taylor, and
  Massimiliano Pontil.
\newblock Empirical risk minimization under fairness constraints.
\newblock In \emph{Advances in Neural Information Processing Systems}, pages
  2791--2801, 2018.

\bibitem[Dua and Graff(2017)]{Dua:2019}
Dheeru Dua and Casey Graff.
\newblock {UCI} machine learning repository, 2017.
\newblock URL \url{http://archive.ics.uci.edu/ml}.

\bibitem[Freiwald(2014)]{freiwald}
Ron Freiwald.
\newblock Lecture notes in analysis, 2014.
\newblock URL \url{https://www.math.wustl.edu/~freiwald/310sequences2.pdf}.

\bibitem[Goel et~al.(2018)Goel, Yaghini, and Faltings]{goel2018non}
Naman Goel, Mohammad Yaghini, and Boi Faltings.
\newblock Non-discriminatory machine learning through convex fairness criteria.
\newblock In \emph{Proceedings of the 2018 AAAI/ACM Conference on AI, Ethics,
  and Society}, pages 116--116, 2018.

\bibitem[Hardt et~al.(2016)Hardt, Price, and Srebro]{hardt2016equality}
Moritz Hardt, Eric Price, and Nathan Srebro.
\newblock Equality of opportunity in supervised learning.
\newblock \emph{arXiv preprint arXiv:1610.02413}, 2016.

\bibitem[Kamiran and Calders(2012)]{kamiran2012data}
Faisal Kamiran and Toon Calders.
\newblock Data preprocessing techniques for classification without
  discrimination.
\newblock \emph{Knowledge and Information Systems}, 33\penalty0 (1):\penalty0
  1--33, 2012.

\bibitem[Karush(1939)]{karush1939minima}
William Karush.
\newblock Minima of functions of several variables with inequalities as side
  constraints.
\newblock \emph{M. Sc. Dissertation. Dept. of Mathematics, Univ. of Chicago},
  1939.

\bibitem[Kuhn and Tucker(1951)]{kuhn1951}
H.~W. Kuhn and A.~W. Tucker.
\newblock Nonlinear programming.
\newblock In \emph{Proceedings of the Second Berkeley Symposium on Mathematical
  Statistics and Probability}, pages 481--492, Berkeley, Calif., 1951.
  University of California Press.
\newblock URL \url{https://projecteuclid.org/euclid.bsmsp/1200500249}.

\bibitem[Liu et~al.(2015)Liu, Luo, Wang, and Tang]{liu2015deep}
Ziwei Liu, Ping Luo, Xiaogang Wang, and Xiaoou Tang.
\newblock Deep learning face attributes in the wild.
\newblock In \emph{Proceedings of the IEEE international conference on computer
  vision}, pages 3730--3738, 2015.

\bibitem[Lohaus et~al.(2020)Lohaus, Perrot, and von Luxburg]{lohaus20}
Michael Lohaus, Micha{\"e}l Perrot, and Ulrike von Luxburg.
\newblock Too relaxed to be fair.
\newblock In \emph{International Conference on Machine Learning}, 2020.

\bibitem[Milojkovic et~al.(2019)Milojkovic, Antognini, Bergamin, Faltings, and
  Musat]{milojkovic2019multi}
Nikola Milojkovic, Diego Antognini, Giancarlo Bergamin, Boi Faltings, and
  Claudiu Musat.
\newblock Multi-gradient descent for multi-objective recommender systems.
\newblock \emph{arXiv preprint arXiv:2001.00846}, 2019.

\bibitem[Okabe et~al.(2003)Okabe, Jin, and Sendhoff]{okabe2003critical}
Tatsuya Okabe, Yaochu Jin, and Bernhard Sendhoff.
\newblock A critical survey of performance indices for multi-objective
  optimisation.
\newblock In \emph{The 2003 Congress on Evolutionary Computation, 2003.
  CEC'03.}, volume~2, pages 878--885. IEEE, 2003.

\bibitem[Pariser(2011)]{pariser11}
Eli Pariser.
\newblock \emph{The filter bubble: What the Internet is hiding from you}.
\newblock Penguin UK, 2011.

\bibitem[Poirion et~al.(2017)Poirion, Mercier, and
  D{\'e}sid{\'e}ri]{poirion2017descent}
Fabrice Poirion, Quentin Mercier, and Jean-Antoine D{\'e}sid{\'e}ri.
\newblock Descent algorithm for nonsmooth stochastic multiobjective
  optimization.
\newblock \emph{Computational Optimization and Applications}, 68\penalty0
  (2):\penalty0 317--331, 2017.

\bibitem[Sattigeri et~al.(2019)Sattigeri, Hoffman, Chenthamarakshan, and
  Varshney]{sattigeri2019fairness}
Prasanna Sattigeri, Samuel~C Hoffman, Vijil Chenthamarakshan, and Kush~R
  Varshney.
\newblock Fairness gan: Generating datasets with fairness properties using a
  generative adversarial network.
\newblock \emph{IBM Journal of Research and Development}, 63\penalty0
  (4/5):\penalty0 3--1, 2019.

\bibitem[Srinivasan and Shocker(1973)]{srinivasan1973linear}
Venkataraman Srinivasan and Allan~D Shocker.
\newblock Linear programming techniques for multidimensional analysis of
  preferences.
\newblock \emph{Psychometrika}, 38\penalty0 (3):\penalty0 337--369, 1973.

\bibitem[Srivastava et~al.(2014)Srivastava, Hinton, Krizhevsky, Sutskever, and
  Salakhutdinov]{srivastava2014dropout}
Nitish Srivastava, Geoffrey Hinton, Alex Krizhevsky, Ilya Sutskever, and Ruslan
  Salakhutdinov.
\newblock Dropout: a simple way to prevent neural networks from overfitting.
\newblock \emph{The journal of machine learning research}, 15\penalty0
  (1):\penalty0 1929--1958, 2014.

\bibitem[Valdivia et~al.(2020)Valdivia, S{\'a}nchez-Monedero, and
  Casillas]{valdivia2020fair}
Ana Valdivia, Javier S{\'a}nchez-Monedero, and Jorge Casillas.
\newblock How fair can we go in machine learning? assessing the boundaries of
  fairness in decision trees.
\newblock \emph{arXiv preprint arXiv:2006.12399}, 2020.

\bibitem[Wang and Rangaiah(2017)]{wang2017application}
Zhiyuan Wang and Gade~Pandu Rangaiah.
\newblock Application and analysis of methods for selecting an optimal solution
  from the pareto-optimal front obtained by multiobjective optimization.
\newblock \emph{Industrial \& Engineering Chemistry Research}, 56\penalty0
  (2):\penalty0 560--574, 2017.

\bibitem[Xu et~al.(2015)Xu, Wang, Chen, and Li]{xu2015empirical}
Bing Xu, Naiyan Wang, Tianqi Chen, and Mu~Li.
\newblock Empirical evaluation of rectified activations in convolutional
  network.
\newblock \emph{arXiv preprint arXiv:1505.00853}, 2015.

\bibitem[Zafar et~al.(2017{\natexlab{a}})Zafar, Valera, Gomez~Rodriguez, and
  Gummadi]{zafar2017fpr}
Muhammad~Bilal Zafar, Isabel Valera, Manuel Gomez~Rodriguez, and Krishna~P
  Gummadi.
\newblock Fairness beyond disparate treatment \& disparate impact: Learning
  classification without disparate mistreatment.
\newblock In \emph{Proceedings of the 26th international conference on world
  wide web}, pages 1171--1180, 2017{\natexlab{a}}.

\bibitem[Zafar et~al.(2017{\natexlab{b}})Zafar, Valera, Rogriguez, and
  Gummadi]{zafar2017linear}
Muhammad~Bilal Zafar, Isabel Valera, Manuel~Gomez Rogriguez, and Krishna~P
  Gummadi.
\newblock Fairness constraints: Mechanisms for fair classification.
\newblock In \emph{Artificial Intelligence and Statistics}, pages 962--970,
  2017{\natexlab{b}}.

\bibitem[Zitzler et~al.(2007)Zitzler, Brockhoff, and
  Thiele]{zitzler2007hypervolume}
Eckart Zitzler, Dimo Brockhoff, and Lothar Thiele.
\newblock The hypervolume indicator revisited: On the design of
  pareto-compliant indicators via weighted integration.
\newblock In \emph{International Conference on Evolutionary Multi-Criterion
  Optimization}, pages 862--876. Springer, 2007.

\bibitem[{\v{Z}}liobaite et~al.(2011){\v{Z}}liobaite, Kamiran, and
  Calders]{vzliobaite2011handling}
Indre {\v{Z}}liobaite, Faisal Kamiran, and Toon Calders.
\newblock Handling conditional discrimination.
\newblock In \emph{2011 IEEE 11th International Conference on Data Mining},
  pages 992--1001. IEEE, 2011.

\end{thebibliography}
\end{document}